\documentclass{article} 
\usepackage{iclr2021_conference,times}
%

\usepackage[hidelinks]{hyperref}
\usepackage{url}

\title{Geometry-Aware Gradient Algorithms for Neural Architecture Search}


\author{
Liam Li$^{\ast1}$, Mikhail Khodak$^{\ast2}$, Maria-Florina Balcan$^2$, and Ameet Talwalkar$^{1,2}$ \\
$^1$ Determined AI, $^2$ Carnegie Mellon University, $^\ast$ denotes equal contribution \\
\texttt{me@liamcli.com,khodak@cmu.edu,ninamf@cs.cmu.edu,talwalkar@cmu.edu}\vspace{-2mm}
}

%

\iclrfinalcopy 

\usepackage{amsmath,amssymb,amsthm}
\usepackage{comment}
\usepackage{color}
\usepackage{microtype}
\usepackage{graphicx}
\usepackage{subcaption}
\usepackage[ruled,vlined]{algorithm2e}
\usepackage{booktabs}
\usepackage{threeparttable}

\newtheorem{Def}{Definition}
\newtheorem{Thm}{Theorem}

\newtheorem{Cor}{Corollary}
\newtheorem{Clm}{Claim}

\newtheorem{Asu}{Assumption}
\newtheorem{Set}{Setting}

\DeclareMathOperator*{\argmin}{arg\,min}

\newcommand{\Unif}{\operatorname{Unif}}
\newcommand{\prox}{\operatorname{prox}}

\newcommand{\Breg}{D}

\newcommand{\Alg}{\texttt{Alg}}
\newcommand{\Map}{\texttt{Map}}

\newcommand{\A}{\mathcal A}

\newcommand{\D}{\mathcal D}
\newcommand{\E}{\mathbb E}

\newcommand{\N}{\mathbb N}

\newcommand{\R}{\mathbb R}

\newcommand{\X}{\mathcal X}

\newcommand{\V}{\mathbb V}

\newcommand{\ops}{O}
\makeatletter
\def\modfootnote{\gdef\@thefnmark{}\@footnotetext}
\makeatother

\usepackage{scalerel,mathtools}
\let\svsqrt\sqrt
\newsavebox\Nsqrt
\def\sr#1{\ThisStyle{%
		\savebox\Nsqrt{\scalebox{.5}[1]{$\SavedStyle\svsqrt{\phantom{\cramped{#1#1}}}$}}%
		\ooalign{\usebox{\Nsqrt}\cr\kern.2pt\usebox{\Nsqrt}\cr\hfil$\SavedStyle\cramped{#1}$}}}

\usepackage{enumitem}
\usepackage{bm}
\def\*#1{\mathbf{#1}}

\begin{document}

\maketitle

\begin{abstract}

\vspace{-1mm}
Recent state-of-the-art methods for neural architecture search (NAS) exploit gradient-based optimization by relaxing the problem into continuous optimization over architectures and shared-weights, a noisy process that remains poorly understood. We argue for the study of single-level empirical risk minimization to understand NAS with weight-sharing, reducing the design of NAS methods to devising optimizers and regularizers that can quickly obtain high-quality solutions to this problem. Invoking the theory of mirror descent, we present a geometry-aware framework that exploits the underlying structure of this optimization to return sparse architectural parameters, leading to simple yet novel algorithms that enjoy fast convergence guarantees and achieve state-of-the-art accuracy on the latest NAS benchmarks in computer vision. Notably, we exceed the best published results for both CIFAR and ImageNet on both the DARTS search space and NAS-Bench-201; on the latter we achieve near-oracle-optimal performance on CIFAR-10 and CIFAR-100. Together, our theory and experiments demonstrate a principled way to co-design optimizers and continuous relaxations of discrete NAS search spaces.

\end{abstract}
\vspace{-1mm}
\section{Introduction}\label{sec:intro}
\vspace{-1mm} 

Neural architecture search has become an important tool for automating machine learning (ML) but can require hundreds of thousands of GPU-hours to train.
Recently, {\em weight-sharing} approaches have achieved state-of-the-art performance while drastically reducing the computational cost of NAS to just that of training a single {\em shared-weights network} \citep{pham:18,liu:19}.
Methods such as DARTS \citep{liu:19}, GDAS \citep{dong:19}, and many others \citep{pham:18,zheng:19,yang:19,xie:19,liu:18,laube:19,cai:19,akimoto:19,xu:20} combine weight-sharing with a continuous relaxation of the discrete search space to allow cheap gradient updates, enabling the use of popular optimizers.
However, despite some empirical success, weight-sharing remains poorly understood and has received criticism due to (1) rank-disorder \citep{yu:20,zela:20a,zhang:20,pourchot:20}, where the shared-weights performance is a poor surrogate of standalone performance, and (2) poor results on recent benchmarks \citep{dong:20,zela:20b}.

Motivated by the challenge of developing simple and efficient methods that achieve state-of-the-art performance, we study how to best handle the goals and optimization objectives of NAS.
We start by observing that weight-sharing subsumes architecture hyperparameters as another set of learned parameters of the shared-weights network, in effect extending the class of functions being learned.
This suggests that a reasonable approach towards obtaining high-quality NAS solutions is to study how to regularize and optimize the empirical risk over this extended class.
While many regularization approaches have been implicitly proposed in recent NAS efforts, we focus instead on the question of optimizing architecture parameters, which may not be amenable to standard procedures such as SGD that work well for standard neural network weights.
In particular, to better-satisfy desirable properties such as generalization and sparsity of architectural decisions, we propose to constrain architecture parameters to the simplex and update them using exponentiated gradient, which has favorable convergence properties due to the underlying problem structure.
Theoretically, we draw upon the mirror descent meta-algorithm \citep{nemirovski:83,beck:03} to give convergence guarantees when using any of a broad class of such {\em geometry-aware} gradient methods to optimize the weight-sharing objective;
empirically, we show that our solution leads to strong improvements on several NAS benchmarks.
We summarize these contributions below:
\begin{enumerate}[leftmargin=*]\setlength\itemsep{0mm}
	\item We argue for studying NAS with weight-sharing as a single-level objective over a structured function class in which architectural decisions are treated as learned parameters rather than hyperparameters.
	Our setup clarifies recent concerns about rank disorder and makes clear that proper regularization and optimization of this objective is critical to obtaining high-quality solutions. 
	\item	Focusing on optimization, we propose to improve existing NAS algorithms by re-parameterizing architecture parameters over the simplex and updating them using exponentiated gradient, a variant of mirror descent that converges quickly over this domain and enjoys favorable sparsity properties.
	This simple modification---which we call the {\bf Geometry-Aware Exponentiated Algorithm (GAEA)}---is easily applicable to numerous methods, including first-order DARTS \cite{liu:19}, GDAS \cite{dong:19}, and PC-DARTS \citep{xu:20}.
	\item To show correctness and efficiency of our scheme, we prove polynomial-time stationary-point convergence of block-stochastic mirror descent---a family of geometry-aware gradient algorithms that includes GAEA---over a continuous relaxation of the single-level NAS objective.
	To the best of our knowledge these are the first finite-time convergence guarantees for gradient-based NAS.
	\item We demonstrate that GAEA improves upon state-of-the-art methods on three of the latest NAS benchmarks for computer vision.
	Specifically, we beat the current best results on NAS-Bench-201 \citep{dong:20} by 0.18\% on CIFAR-10, 1.59\% on CIFAR-100, and 0.82\% on ImageNet-16-120; we also outperform the state-of-the-art on the DARTS search space \cite{liu:19}, for both CIFAR-10 and ImageNet, and match it on NAS-Bench-1Shot1 \citep{zela:20b}.\footnote{Code to obtain these results has been made available in the supplementary material.}
\end{enumerate}


\textbf{Related Work}. Most optimization analyses of NAS show monotonic improvement \citep{akimoto:19}, asymptotic guarantees \citep{yao:20}, or bounds on auxiliary quantities disconnected from any objective \citep{noy:19,nayman:19,carlucci:19}.
In contrast, we prove polynomial-time stationary-point convergence on a single-level objective for weight-sharing NAS, so far only studied empirically \citep{xie:19,li:19b}.
Our results draw upon the mirror descent meta-algorithm \citep{nemirovski:83,beck:03} and extend recent nonconvex convergence results~\cite{zhang:18} to handle alternating descent.
While there exist related results~\citep{dang:15} the associated guarantees do not hold for the algorithms we propose.
Finally, we note that a variant of GAEA that modifies first-order DARTS is related to XNAS~\citep{nayman:19}, whose update also involves exponentiated gradient;
however, GAEA is simpler and easier to implement.\footnote{XNAS code does not implement search and, as with previous efforts \cite[OpenReview]{li:19b}, we cannot reproduce results after correspondence with the authors. XNAS's best architecture achieves an average test error of $2.70\%$ under the DARTS evaluation, while GAEA achieves $2.50\%$. For details see Appendix~\ref{ssec:xnas}.}
Furthermore, the regret guarantees for XNAS do not relate to any meaningful performance measure for NAS such as speed or accuracy, whereas we guarantee convergence on the ERM objective.
\vspace{-2mm}
\section{The Weight-Sharing Optimization Problem}\label{sec:arch}
\vspace{-1mm}

In supervised ML we have a dataset $T$ of labeled pairs $(x,y)$ drawn from a distribution $\D$ over input/output spaces $X$ and $Y$.
The goal is to use $T$ to search a function class $H$ for $h_{\*w}:X\mapsto Y$ parameterized by $\*w\in\R^d$ that has low expected test loss $\ell(h_{\*w}(x),y)$ when using $x$ to predict the associated $y$ on unseen samples drawn from $D$, as measured by some loss $\ell:Y\times Y\mapsto[0,\infty)$.
A common way to do so is by approximate (regularized) empirical risk minimization (ERM), i.e. finding $\*w\in\R^d$ with the smallest average loss over $T$, via some iterative method $\Alg$, e.g. SGD.

\vspace{-1mm}
\subsection{The Benefits and Criticisms of Weight-Sharing for NAS}
\vspace{-1mm}

NAS is often viewed as hyperparameter optimization on top of $\Alg$, with each architecture $a\in\A$ corresponding to a function class $H_a=\{h_{\*w,a}:X\mapsto Y,\*w\in\R^d\}$ to be selected by using validation data $V\subset X\times Y$ to evaluate the predictor obtained by fixing $a$ and doing approximate ERM over $T$:
\begin{equation}\label{eq:nas}\vspace{-1mm}
\min_{a\in\A}\quad\sum_{(x,y)\in V}\ell(h_{\*w_a,a}(x),y)\qquad\textrm{s.t.}\qquad\*w_a=\Alg(T,a)\vspace{-1mm}
\end{equation}
Since training individual sets of weights for any sizeable number of architectures is prohibitive, weight-sharing methods instead use a single set of shared weights to obtain validation signal about many architectures at once.
In its most simple form, RS-WS \citep{li:19a}, these weights are trained to minimize a {\em non-adaptive} objective, $\min_{\*w\in\R^d}\E_a\sum_{(x,y)\in T}\ell(h_{\*w_a,a}(x),y)$, where the expectation is over a fixed distribution over architectures $\A$.
The final architecture $a$ is then chosen to maximize the outer (validation) objective in \eqref{eq:nas} subject to $\*w_a=\*w$.
More frequently used is a {\em bilevel} objective over some continuous relaxation $\Theta$ of the architecture space $\A$, after which a valid architecture is obtained via a discretization step $\Map:\Theta\mapsto\A$ \citep{pham:18,liu:19}:\vspace{-1mm}
\begin{equation}\label{eq:bilevel}
\min_{\theta\in\Theta}\quad\sum_{(x,y)\in V}\ell(h_{\*w,\theta}(x),y)\qquad\textrm{s.t.}\qquad\*w\in\argmin_{\*u\in\R^d}\sum_{(x,y)\in T}\ell(h_{\*u,\theta}(x),y)\vspace{-1mm}
\end{equation}
This objective is not significantly different from \eqref{eq:bilevel}, since $\Alg(T,a)$ approximately minimizes the empirical risk w.r.t. $T$;
the difference is replacing discrete architectures with relaxed {\em architecture parameters} $\theta\in\Theta$, w.r.t. which we can take derivatives of the outer objective.
This allows \eqref{eq:bilevel} to be approximated via alternating gradient updates w.r.t. $\*w$ and $\theta$.
Relaxations can be {\em stochastic}, so that $\Map(\theta)$ is a sample from a $\theta$-parameterized distribution \citep{pham:18,dong:19}, or a {\em mixture}, in which case $\Map(\theta)$ selects architectural decisions with the highest weight in a convex combination given by $\theta$ \citep{liu:19}.
We overview this in more detail in Appendix~\ref{ssec:nas}.

While weight-sharing significantly shortens search \citep{pham:18}, it draws two main criticisms:
\begin{itemize}[leftmargin=*]\setlength\itemsep{-1mm}\vspace{-2mm}
	\item Rank disorder: this describes when the rank of an architecture $a$ according to the validation risk evaluated with fixed shared weights $\*w$ is poorly correlated with the one using ``standalone" weights $\*w_a=\Alg(T,a)$.
	This causes suboptimal architectures to be selected after shared weights search \citep{yu:20,zela:20a,zhang:20,pourchot:20}.
	\item Poor performance: weight-sharing can converge to degenerate architectures \citep{zela:20b} and is outperformed by regular hyperparameter tuning on NAS-Bench-201 \citep{dong:20}.
\end{itemize}

\vspace{-2mm}
\subsection{Single-Level NAS as a Baseline Object of Study}
\vspace{-1mm}

Why are we able to apply weight-sharing to NAS?
The key is that, unlike regular hyperparameters such as step-size, {\em architectural} hyperparameters directly affect the loss function without requiring a dependent change in the model weights $\*w$.
Thus we can distinguish architectures without retraining simply by changing architectural decisions.
Besides enabling weight-sharing, this point reveals that the goal of NAS is perhaps better viewed as a regular learning problem over an extended class $H_\A=\bigcup_{a\in\A}H_a=\{h_{\*w,a}:X\mapsto Y,\*w\in\R^d,a\in\A\}$ that subsumes the architectural decisions as parameters of a larger model class, an unrelaxed ``supernet."
The natural approach to solving this is by approximate empirical risk minimization, e.g. by approximating continuous objective below on the right using a gradient algorithm and passing the output $\theta$ through $\Map$ to obtain a valid architecture:\vspace{-1mm}
\begin{equation}\label{eq:erm}
	\underbrace{\min_{\*w\in\R^d,a\in\A}\quad\sum_{(x,y)\in T}\ell(h_{\*w,a}(x),y)}_\textrm{discrete (unrelaxed) supernet (NAS ERM)}
	\qquad\qquad
	\underbrace{\min_{\*w\in\R^d,\theta\in\Theta}\quad\sum_{(x,y)\in T}\ell(h_{\*w,\theta}(x),y)}_\textrm{continuous relaxation (supernet ERM)}\vspace{-1mm}
\end{equation}
Several works have optimized this single-level objective as an alternative to bilevel \eqref{eq:bilevel} \citep{xie:19,li:19b}.
We argue for its use as the baseline object of study in NAS for three reasons:
\begin{enumerate}[leftmargin=*]\setlength\itemsep{-1mm}\vspace{-2mm}
	\item As discussed above, it is the natural first approach to solving the statistical objective of NAS: 
	finding a good predictor $h_{\*w,a}\in H_\A$ in the extended function class over architectures and weights.
	\item The common alternating gradient approach to the bilevel problem \eqref{eq:bilevel} is in practice very similar to alternating block approaches to ERM \eqref{eq:erm};
	as we will see, there are established ways of analyzing such methods for the latter objective, while for the former convergence is known only under very strong assumptions such as uniqueness of the inner minimum \citep{franceschi:18}.
	\item While less frequently used in practice than bilevel, single-level optimization can be very effective: we use it to achieve new state-of-the-art results on NAS-Bench-201 \citep{dong:20}.
\end{enumerate}
Understanding NAS as single-level optimization---the usual deep learning setting---makes weight-sharing a natural, not surprising, approach.
Furthermore, for methods---both single-level and bilevel---that adapt architecture parameters during search, it suggests that we need not worry about rank disorder as long as we can use optimization to find a single feasible point that generalizes well; we explicitly do {\em not} need a ranking.
Non-adaptive methods such as RS-WS still do require rank correlation to select good architectures after search, but they are explicitly {\em not} changing $\theta$ and so have no variant solving \eqref{eq:erm}.
The single-level formulation thus reduces search method design to well-studied questions of how to best regularize and optimize ERM.
While there are many techniques for regularizing weight-sharing---including partial channels \citep{xu:20} and validation Hessian penalization \citep{zela:20b}---we focus on the second question of optimization.

\vspace{-1mm}
\section{Geometry-Aware Gradient Algorithms}\label{sec:opt}
\vspace{-1mm}

We seek to minimize the (possibly regularized) empirical risk $f(\*w,\theta)=\frac1{|T|}\sum_{(x,y)\in T}\ell(h_{\*w,\theta}(x),y)$ over shared-weights $\*w\in\R^d$ and architecture parameters $\theta\in\Theta$.
Assuming we have noisy gradients of $f$ w.r.t. $\*w$ or $\theta$ at any point $(\*w,\theta)\in\R^d\times\Theta$---i.e. $\tilde\nabla_{\*w}f(\*w,\theta)$ or $\tilde\nabla_\theta f(\*w,\theta)$ satisfying $\E\tilde\nabla_{\*w}f(\*w,\theta)=\nabla_{\*w}f(\*w,\theta)$ or $\E\tilde\nabla_\theta f(\*w,\theta)=\nabla_\theta f(\*w,\theta)$, respectively---our goal is a point where $f$, or at least its gradient, is small, while taking as few gradients as possible.
Our main complication is that architecture parameters lie in a constrained, non-Euclidean domain $\Theta$.
Most search spaces $\A$ are product sets of categorical decisions---which operation $o\in\ops$ to use at edge $e\in E$---so the natural relaxation is a product of $|E|$ $|\ops|$-simplices.
However, NAS methods often re-parameterize $\Theta$ to be unconstrained using a softmax and then SGD or Adam \citep{kingma:15}.
Is there a better parameterization-algorithm co-design?
We consider a {\em geometry-aware} approach that uses mirror descent to design NAS methods with better properties depending on the domain;
a key desirable property is to return {\em sparse} architectural parameters to reduce loss from post-search discretization.

\vspace{-2mm}
\subsection{Background on Mirror Descent}\label{ssec:geometry}
\vspace{-1mm}

Mirror descent has many formulations \citep{nemirovski:83,beck:03,shalev-shwartz:11};
the {\em proximal} starts by noting that, in the unconstrained case, an SGD update at a point $\theta\in\Theta=\R^k$ given gradient estimate $\tilde\nabla f(\theta)$ with step-size $\eta>0$ is equivalent to\vspace{-1mm}
\begin{equation}\label{eq:prox}
\theta-\eta\tilde\nabla f(\theta)\quad=\quad\argmin_{\*u\in\R^k}\quad\eta\tilde\nabla f(\theta)\boldsymbol\cdot\*u\quad+\quad\frac12\|\*u-\theta\|_2^2\vspace{-1mm}
\end{equation}
Here the first term aligns the output with the gradient while the second (proximal) term regularizes for closeness to the previous point as measured by the Euclidean distance.
While the SGD update has been found to work well for unconstrained high-dimensional optimization, e.g. deep nets, this choice of proximal regularization may be sub-optimal over a constrained space with sparse solutions.
The canonical such setting is optimization over the unit simplex, i.e. when $\Theta=\{\theta\in[0,1]^k:\|\theta\|_1=1\}$.
Replacing the $\ell_2$-regularizer in Equation~\ref{eq:prox} by the relative entropy  $\*u\boldsymbol\cdot(\log\*u-\log\theta)$, i.e. the KL-divergence, yields the exponentiated gradient (EG) update ($\odot$ denotes element-wise product):\vspace{-1mm}
\begin{equation}\label{eq:eg}
\theta\odot\exp(-\eta\tilde\nabla f(\theta))\quad\propto\quad\argmin_{\*u\in\Theta}\quad\eta\tilde\nabla f(\theta)\boldsymbol\cdot\*u\quad+\quad\*u\boldsymbol\cdot(\log\*u-\log \theta)\vspace{-1mm}
\end{equation}
Note that the full EG update is obtained by $\ell_1$-normalizing the l.h.s.
It is well-known that EG over the $k$-dimensional simplex requires only $\mathcal O(\log k)/\varepsilon^2$ iterations to achieve a function value $\varepsilon$-away from optimal~\citep[Theorem~5.1]{beck:03}, compared to the $O(k/\varepsilon^2)$ guarantee of gradient descent.
This nearly dimension-independent iteration complexity is achieved by choosing a regularizer---the KL divergence---well-suited to the underlying geometry---the simplex.
More generally, mirror descent is specified by a distance-generating function (DGF) $\phi$ that is strongly-convex w.r.t. some norm.
$\phi$ induces a {\em Bregman divergence} $\Breg_\phi(\*u||\*v)=\phi(\*u)-\phi(\*v)-\nabla\phi(\*v)\boldsymbol\cdot(\*u-\*v)$ \citep{bregman:67}, a notion of distance on $\Theta$ that acts as a regularizer in the mirror descent update:\vspace{-1mm}
\begin{equation}\label{eq:mirror}
\argmin_{\*u\in\Theta}\quad\eta\tilde\nabla f(\theta)\boldsymbol\cdot\*u\quad+\quad\Breg_\phi(\*u||\theta)\vspace{-1mm}
\end{equation}
For example, to recover SGD \eqref{eq:prox} we set $\phi(\*u)=\frac12\|\*u\|_2^2$, which is strongly-convex w.r.t. the Euclidean norm, while EG \eqref{eq:eg} is recovered by setting $\phi(\*u)=\*u\boldsymbol\cdot\log\*u$, strongly-convex w.r.t. the $\ell_1$-norm.

\vspace{-2mm}
\subsection{Block-Stochastic Mirror Descent}\label{ssec:block}
\vspace{-1mm}

\begin{algorithm}[!t]
	\scriptsize
	\caption{
		Block-stochastic mirror descent optimization of a function $f:\R^d\times\Theta\mapsto\R$.
	}\label{alg:general}
	\DontPrintSemicolon
	\KwIn{initialization $(\*w^{(1)},\theta^{(1)})\in\R^d\times\Theta$, strongly-convex DGF $\phi:\Theta\mapsto\R$, number of iterations $T\ge1$, step-size $\eta>0$}
	\For{{\em iteration} $t=1,\dots,T$}{
		sample $b_t\sim\Unif\{\*w,\theta\}$\tcp*{randomly select update block}
		\uIf{{\em block} $b_t=\*w$}{
			$\*w^{(t+1)}\gets\*w^{(t)}-\eta\tilde\nabla_{\*w}f(\*w^{(t)},\theta^{(t)})$\tcp*{SGD update to shared weights}
			$\theta^{(t+1)}\gets\theta^{(t)}$ \tcp*{no update to architecture params}
		}
		\Else{
			$\*w^{(t+1)}\gets\*w^{(t)}$\tcp*{no update to shared weights}
			$\theta^{(t+1)}\gets~~\argmin_{\*u\in\Theta}\quad\eta\tilde\nabla_\theta f(\*w^{(t)},\theta^{(t)})\boldsymbol\cdot\*u~~+~~\Breg_\phi(\*u||\theta^{(t)})$
			\tcp*{update architecture params}
		}
	}
	\KwOut{$(\*w^{(r)},\theta^{(r)})$~~for~~$r\sim\Unif\{1,\dots,T\}$\tcp*{return random iterate}}
\end{algorithm}

In the previous section we saw how mirror descent can perform better over certain geometries such as the simplex.
However, in weight-sharing we are interested in optimizing over a hybrid geometry containing both the shared weights in an unconstrained Euclidean space and the architecture parameters in a non-Euclidean domain.
Thus we focus on optimization over two blocks: 
shared weights $\*w\in\R^d$ and architecture parameters $\theta\in\Theta$, the latter associated with a DGF $\phi$ that is strongly-convex w.r.t. some norm $\|\cdot\|$.
In NAS a common approach is to perform alternating gradient steps on each domain;
for example, both ENAS \citep{pham:18} and first-order DARTS \citep{liu:19} alternate between SGD on the shared weights and Adam on architecture parameters.
This approach is encapsulated in the block-stochastic algorithm described in Algorithm~\ref{alg:general}, which at each step chooses one block at random to update using mirror descent (recall that SGD is a variant) and after $T$ steps returns a random iterate.
Algorithm~\ref{alg:general} generalizes the single-level variant of both ENAS and first-order DARTS if SGD is used to update $\theta$ instead of Adam, with some mild caveats:
in practice blocks are picked cyclically and the algorithm returns the last iterate, not a a random one.
To analyze the convergence of Algorithm~\ref{alg:general} we first state some regularity assumptions on the function:
\begin{Asu}\label{asu:general}
	Suppose $\phi$ is strongly-convex w.r.t. some norm $\|\cdot\|$ on a convex set $\Theta$ and the objective function $f:\R^d\times\Theta\mapsto[0,\infty)$ satisfies the following:
	\begin{enumerate}[leftmargin=*]\setlength\itemsep{-2mm}\vspace{-3mm}
		\item {\bf $\gamma$-relatively-weak-convexity:} $f(\*w,\theta)+\gamma\phi(\theta)$ is convex on $\R^d\times\Theta$ for some $\gamma>0$.
		\item {\bf gradient bound:} $\E\|\tilde\nabla_{\*w}f(\*w,\theta)\|_2^2\le G_{\*w}^2$ and $\E\|\tilde\nabla_\theta f(\*w,\theta)\|_*^2\le G_\theta^2$ for some $G_{\*w},G_\theta\ge0$.\vspace{-2mm}
	\end{enumerate}
\end{Asu}
The second assumption is a standard bound on the gradient norm while the first is a generalization of smoothness that allows all smooth and some non-smooth non-convex functions \citep{zhang:18}.

Our aim will be to show (first-order) $\varepsilon$-stationary-point convergence of Algorithm~\ref{alg:general}, a standard metric indicating that it has reached a point with no feasible descent direction, up to error $\varepsilon$; 
for example, in the unconstrained Euclidean case an $\varepsilon$-stationary-point is simply one where the gradient has squared-norm $\le\varepsilon$.
The number of steps required to obtain such a point thus measures how fast a first-order method terminates.
Stationarity is also significant as a necessary condition for optimality.

In our case $\Theta$ may be constrained and so the gradient may never be small, thus necessitating a measure other than gradient norm.
We use {\em Bregman stationarity} \citep[Equation~2.11]{zhang:18}, which measures stationary at a point $(\*w,\theta)$ using the Bregman divergence between the point and its proximal map $\prox_\lambda(\*w,\theta)=\argmin_{\*u\in\R^d\times\Theta}\lambda f(\*u)+\Breg_{\ell_2,\phi}(\*u||\*w,\theta)$ for some $\lambda>0$:\vspace{-1mm}
\begin{equation}\label{eq:stat}
\Delta_\lambda(\*w,\theta)
=\frac{\Breg_{\ell_2,\phi}(\*w,\theta||\prox_\lambda(\*w,\theta))+\Breg_{\ell_2,\phi}(\prox_\lambda(\*w,\theta)||\*w,\theta)}{\lambda^2}\vspace{-1mm}
\end{equation}
Here $\lambda=\frac1{2\gamma}$ and the Bregman divergence $\Breg_{\ell_2,\phi}$ is that of the DGF $\frac12\|\*w\|_2^2+\phi(\theta)$ that encodes the geometry of the joint optimization domain over $\*w\in\R^d$ and $\theta$;
note that the dependence of the stationarity measure on $\gamma$ is standard~\citep{dang:15,zhang:18}.

To understand why reaching a point $(\*w,\theta)$ with small Bregman stationarity is a reasonable goal, note that the proximal operator $\prox_\lambda$ has the property that its fixed points, i.e. those satisfying $(\*w,\theta)=\prox_\lambda(\*w,\theta)$, correspond to points where $f$ has no feasible descent direction.
Thus measuring how close $(\*w,\theta)$ is to being a fixed point of $\prox_\lambda$---as is done using the Bregman divergence in \eqref{eq:stat}---is a good measure of how far away the point is from being a stationary point of $f$.
Finally, note that if $f$ is smooth, $\phi$ is Euclidean, and $\Theta$ is unconstrained---i.e. if we are running SGD over architecture parameters as well---then $\Delta_\frac1{2\gamma}\le\varepsilon$ implies an $O(\varepsilon)$-bound on the squared gradient norm, recovering the standard definition of $\varepsilon$-stationarity.
More intuition on proximal operators can be found in \citet[Section~1.2]{parikh:13}, while further details on Bregman stationarity and how it relates to other notions of convergence can be found in \citet[Section~2.3]{zhang:18}.

The following result shows that Algorithm~\ref{alg:general} needs polynomially many iterations to finds a point $(\*w,\theta)$ with $\varepsilon$-small Bregman stationarity in-expectation:
\begin{Thm}
	Let $F=f(\*w^{(1)},\theta^{(1)})$ be the value of $f$ at initialization.
	Under Assumption~\ref{asu:general}, if we run Algorithm~\ref{alg:general} for $T=\frac{16\gamma F}{\varepsilon^2}(G_{\*w}^2+G_\theta^2)$ iterations with step-size $\eta=\sqrt{\frac{4F}{\gamma(G_{\*w}^2+G_\theta^2)T}}$ then  $\E\Delta_\frac1{2\gamma}(\*w^{(r)},\theta^{(r)})\le\varepsilon$.
	Here the expectation is over the randomness of the algorithm and gradients.\label{thm:general}
\end{Thm}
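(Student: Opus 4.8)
The plan is to adapt the proximal-point (Bregman--Moreau envelope) analysis of stochastic mirror descent for $\gamma$-relatively-weakly-convex objectives to the block-stochastic setting of Algorithm~\ref{alg:general}. Write $z=(\*w,\theta)$, let $\psi(z)=\frac12\|\*w\|_2^2+\phi(\theta)$ be the combined DGF with Bregman divergence $\Breg_{\ell_2,\phi}$, and set $\lambda=\frac1{2\gamma}$, $\hat z^{(t)}=\prox_\lambda(z^{(t)})$, and $\varphi_\lambda(z)=\min_{\*u}\{f(\*u)+\frac1\lambda\Breg_{\ell_2,\phi}(\*u||z)\}$. The goal is a one-step descent inequality of the form $\E\varphi_\lambda(z^{(t+1)})\le\varphi_\lambda(z^{(t)})-c\,\eta\,\Delta_\lambda(z^{(t)})+c'\eta^2(G_{\*w}^2+G_\theta^2)$; summing it telescopes the envelope and, since $\varphi_\lambda\ge0$ and $\varphi_\lambda(z^{(1)})\le F$, bounds the average stationarity.

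First I would establish the one-step inequality. Since $\varphi_\lambda$ is a pointwise minimum, I bound $\varphi_\lambda(z^{(t+1)})\le f(\hat z^{(t)})+\frac1\lambda\Breg_{\ell_2,\phi}(\hat z^{(t)}||z^{(t+1)})$ using the fixed comparator $\hat z^{(t)}$, so it suffices to control $\E\Breg_{\ell_2,\phi}(\hat z^{(t)}||z^{(t+1)})$ relative to $\Breg_{\ell_2,\phi}(\hat z^{(t)}||z^{(t)})$. The key is that $\psi$ and hence $\Breg_{\ell_2,\phi}$ decompose across the two blocks, and at each step only the selected block moves. Applying the three-point property of the Bregman divergence to whichever block is updated, and noting the other block's divergence to $\hat z^{(t)}$ is frozen, I take expectation over the uniform block choice; the $\tfrac12$--$\tfrac12$ averaging combines the two per-block inequalities into a single joint statement, yielding progress $-\tfrac\eta2\langle\nabla f(z^{(t)}),z^{(t)}-\hat z^{(t)}\rangle$. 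The quadratic displacement terms are absorbed by completing the square---using $\tfrac12\|\cdot\|_2^2$ on the $\*w$ block and strong convexity of $\phi$ w.r.t.\ $\|\cdot\|$ on the $\theta$ block, so that $G_\theta$ enters through the dual norm $\|\cdot\|_*$ via the generalized Young inequality---producing noise $\tfrac{\eta^2}4(G_{\*w}^2+G_\theta^2)$ after invoking Assumption~\ref{asu:general}.2.

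Next I would convert the gradient inner product into the Bregman stationarity $\Delta_\lambda$. The optimality condition for $\hat z^{(t)}=\prox_\lambda(z^{(t)})$ gives a subgradient $g\in\partial f(\hat z^{(t)})$ with $\lambda g=\nabla\psi(z^{(t)})-\nabla\psi(\hat z^{(t)})$, so the symmetrized Bregman identity $\Breg(a||b)+\Breg(b||a)=\langle\nabla\psi(a)-\nabla\psi(b),a-b\rangle$ gives $\langle g,z^{(t)}-\hat z^{(t)}\rangle=\lambda\Delta_\lambda(z^{(t)})$. Monotonicity of $\partial(f+\gamma\psi)$, which holds by the $\gamma$-relative-weak-convexity in Assumption~\ref{asu:general}.1, then transfers this from $g$ to the gradient at the iterate, yielding $\langle\nabla f(z^{(t)}),z^{(t)}-\hat z^{(t)}\rangle\ge\lambda(1-\gamma\lambda)\Delta_\lambda(z^{(t)})=\tfrac\lambda2\Delta_\lambda(z^{(t)})$ once $\lambda=\frac1{2\gamma}$ is substituted. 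Plugging this into the envelope bound produces the descent inequality with an explicit positive coefficient on $\Delta_\lambda(z^{(t)})$.

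Finally I would telescope over $t=1,\dots,T$, drop the nonnegative terminal envelope, use $\varphi_\lambda(z^{(1)})\le f(z^{(1)})=F$, divide through by the accumulated step-size factor, and identify $\tfrac1T\sum_t\E\Delta_\lambda(z^{(t)})$ with $\E\Delta_\lambda(z^{(r)})$ since $r\sim\Unif\{1,\dots,T\}$; substituting the stated $\eta$ and $T$ balances the $\tfrac1{\eta T}$ and $\eta$ terms to give the $\varepsilon$ bound. I expect the main obstacle to be the block-coupling step: verifying that the per-block three-point inequalities, together with the decomposition of $\Breg_{\ell_2,\phi}$ and the fact that the unselected block is frozen, average correctly into one inequality in the joint geometry with combined noise $G_{\*w}^2+G_\theta^2$---this is precisely where the argument departs from the single-block guarantee of \citep{zhang:18}, as the per-step progress is halved while both blocks' noise accrues. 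The constant bookkeeping needed to match the exact factors in the stated $\eta$ and $T$ is then routine once this coupling and the strong-convexity/dual-norm accounting are in place.
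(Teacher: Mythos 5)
Your argument is sound and would deliver the theorem, but it takes a genuinely different route from the paper. The paper never re-derives the envelope descent inequality: it instead reduces Algorithm~\ref{alg:general} to \emph{single-block} stochastic mirror descent on the joint space by defining the importance-weighted oracle $G(\*x,i,\xi)=b\*U_iG_i(\*x,\xi)$ (with $b=2$ blocks here), which is an unbiased estimate of a full subgradient with second moment at most $bL^2=2(G_{\*w}^2+G_\theta^2)$, and observing that---precisely because the joint DGF $\frac12\|\*w\|_2^2+\phi(\theta)$ and the feasible set both factor over blocks and the unselected block is frozen---the block update coincides with a full mirror-descent step of size $\eta/b$; after two short claims verifying that $\|(\*w,\theta)\|^2=\|\*w\|_2^2+\|\theta\|^2$ is a norm whose squared dual is the sum of the squared duals, \citep[Theorem~3.1]{zhang:18} is invoked as a black box. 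You instead inline the proof of that cited theorem, pushing the block structure through the one-step envelope inequality, the per-block three-point property, and the prox-optimality/monotonicity step. Both arguments rest on the same underlying fact, the decomposition of the Bregman divergence over blocks; the paper's reduction buys brevity, modularity, and an immediate extension to any number of blocks, while your direct derivation is self-contained, sidesteps the joint norm/dual-norm claims (the $G_{\*w}^2+G_\theta^2$ term falls out of per-block Young inequalities), and makes explicit where each half of Assumption~\ref{asu:general} enters. Two minor cautions: since $f$ is only assumed subdifferentiable, the monotonicity step should be phrased with the subgradient $\E\tilde\nabla f(\*w^{(t)},\theta^{(t)})\in\partial f(\*w^{(t)},\theta^{(t)})$ rather than a gradient; and with the stated $\eta$ and $T$ the bookkeeping in either approach lands at roughly $\tfrac32\varepsilon$ rather than $\varepsilon$ (the paper's own corollary carries the same harmless constant slip), so a small mismatch in constants is expected and not a flaw in your plan.
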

The proof in the appendix follows from single-block analysis \citep[Theorem~3.1]{zhang:18} and in fact holds for the general case of any number of blocks associated to any set of strongly-convex DGFs.
Although there are prior results for the multi-block case \citep{dang:15}, they do not hold for nonsmooth Bregman divergences such as the KL divergence needed for exponentiated gradient.

Thus Algorithm~\ref{alg:general} returns an $\varepsilon$-stationary-point given $T=\mathcal O (G_{\*w}^2+G_\theta^2)/\varepsilon^2$ iterations, where $G_{\*w}^2$ bounds the squared $\ell_2$-norm of the shared-weights gradient $\tilde\nabla_{\*w}$ and $G_\theta^2$ bounds the squared magnitude of the architecture gradient $\tilde\nabla_\theta$, as measured by the {\em dual norm} $\|\cdot\|_*$ of $\|\cdot\|$.
Only the last term $G_\theta$ is affected by our choice of DGF $\phi$.
The DGF of SGD is strongly-convex w.r.t. the $\ell_2$-norm, which is its own dual, so $G_{\*w}^2$ is defined via $\ell_2$.
However, for EG the DGF $\phi(\*u)=\*u\cdot\log\*u$ is strongly-convex w.r.t. the $\ell_1$-norm, whose dual is $\ell_\infty$.
Since the $\ell_2$-norm of a $k$-dimensional vector can be $\sqrt k$ times its $\ell_\infty$-norm, picking this DGF can lead to better bound on $G_\theta$ and thus on the number of iterations.

\begin{figure}[!t]
	\centering
	\begin{subfigure}{0.32\textwidth}
		\centering
		\includegraphics[width=0.9\textwidth,trim=22 0 40 0,page=1]{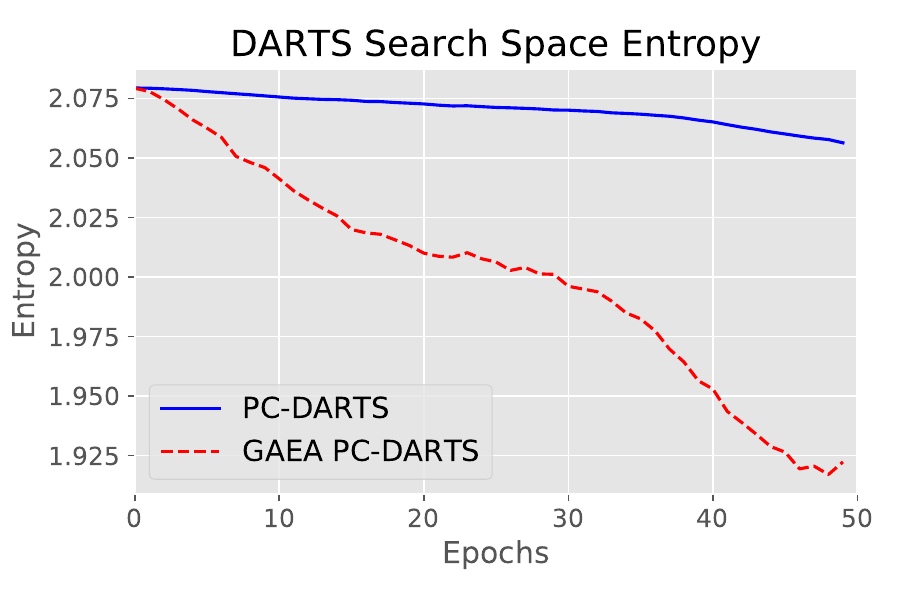}
	\end{subfigure}
	\begin{subfigure}{0.32\textwidth}
		\centering
		\includegraphics[width=0.9\textwidth,trim=22 0 40 0,page=1]{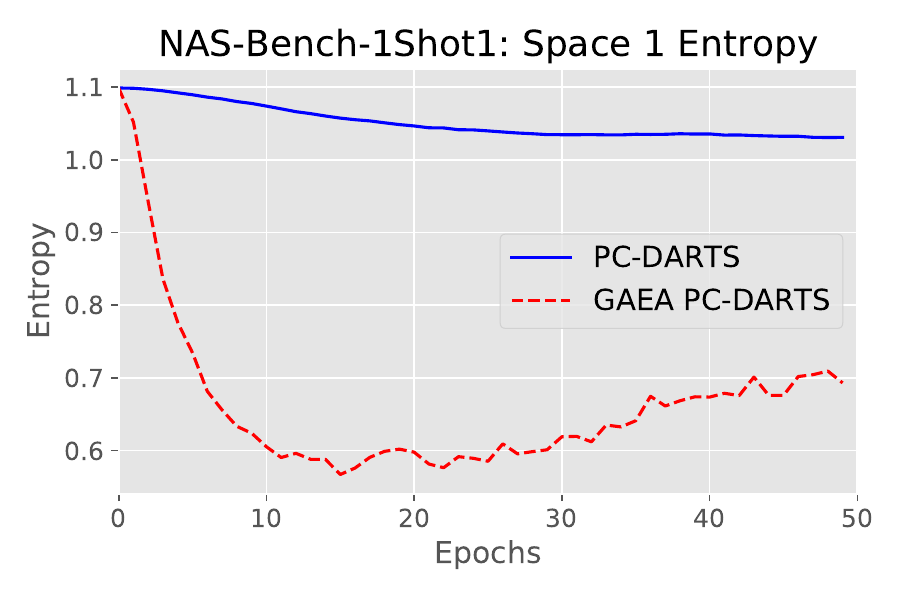}
	\end{subfigure}
	\begin{subfigure}{0.32\textwidth}
		\centering
		\includegraphics[width=0.9\textwidth,trim=22 0 40 0,page=1]{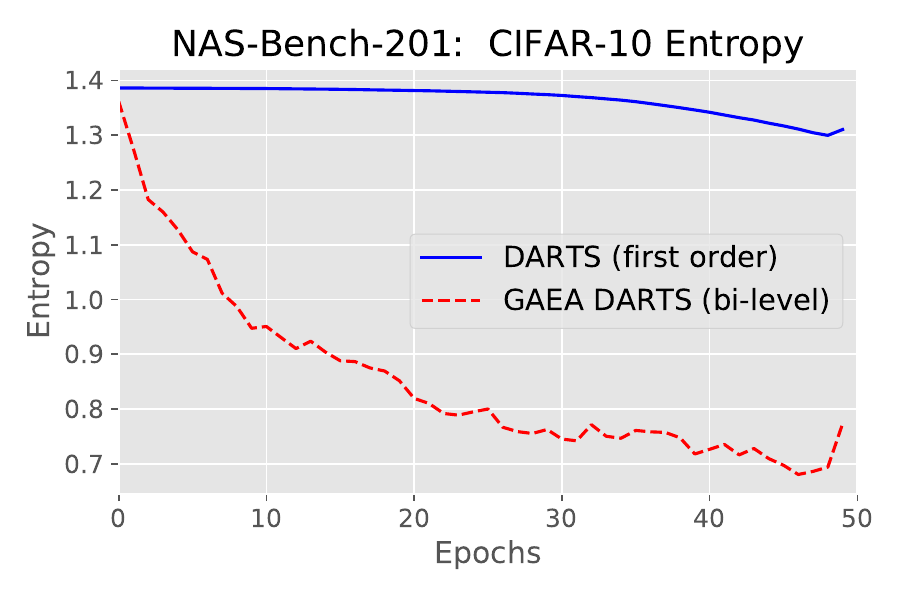}
	\end{subfigure}
	\vspace{-4mm} 
	\caption{\textbf{Sparsity:} Evolution over search phase epochs of the average entropy of the operation-weights for GAEA and approaches it modifies when run on the DARTS search space (left), NAS-Bench-1Shot1 Search Space~1 (middle), and NASBench-201 on CIFAR-10 (right).
		GAEA reduces entropy much more quickly, allowing it to quickly obtain sparse architecture weights.
		This leads to both faster convergence to a single architecture and a lower loss when pruning at the end of search.} \label{fig:entropy}
\end{figure} 

\vspace{-2mm}
\subsection{GAEA: A Geometry-Aware Exponentiated Algorithm}\label{ssec:gaea}
\vspace{-1mm}


Equipped with these single-level guarantees, we turn to designing methods that can in-principle be applied to both the single-level and bilevel objectives, seeking parameterizations and algorithms that converge quickly and encourage favorable properties;
in particular, we focus on returning architecture parameters that are {\em sparse} to reduce loss due to post-search discretization.
EG is often considered to converge quickly to sparse solutions over the simplex \citep{bradley:08,bubeck:19}, which makes it a natural choice for the architecture update.
We thus propose {\bf GAEA}, a {\bf Geometry-Aware Exponentiated Algorithm} in which operation weights on each edge are constrained to the simplex and trained using EG;
as in DARTS, the shared weights $\*w$ are trained using SGD.
GAEA can be used as a simple, principled modification to the many NAS methods that treat architecture parameters $\theta\in\Theta=\R^{|E|\times|\ops|}$ as real-valued ``logits" to be passed through a softmax to obtain mixture weights or probabilities for simplices over the operations $\ops$.
Such methods include DARTS, PC-DARTS \citep{xu:20}, and GDAS \citep{dong:19}.
To apply GAEA, first re-parameterize $\Theta$ to be the product set of $|E|$ simplices, each associated to an edge $(i,j)\in E$;
thus $\theta_{i,j,o}$ corresponds directly to the weight or probability of operation $o\in\ops$ for edge~$(i,j)$, not a logit.
Then, given a stochastic gradient $\tilde\nabla_\theta f(\*w^{(t)},\theta^{(t)})$ and step-size $\eta>0$, replace the architecture update by EG:
\begin{align}\label{eq:gaea}
\begin{split}
\tilde\theta^{(t+1)}&\gets\theta^{(t)}\odot\exp\left(-\eta\tilde\nabla_\theta f(\*w^{(t)},\theta^{(t)})\right)\qquad\quad~\textrm{(multiplicative update)}\\
\theta_{i,j,o}^{(t+1)}&\gets\frac{\tilde\theta_{i,j,o}^{(t+1)}}{\sum_{o'\in\ops}\tilde\theta_{i,j,o'}^{(t+1)}}~~\forall~o\in\ops,~\forall~(i,j)\in E\qquad\textrm{(simplex projection)}
\end{split}
\end{align}
These two simple modifications, {\em re-parameterization} and {\em exponentiation}, suffice to obtain state-of-the-art results on several NAS benchmarks, as shown in Section~\ref{sec:empirical}.
Note that to obtain a bilevel algorithm we simply replace the gradient w.r.t. $\theta$ of the training loss with that of the validation loss.

GAEA is equivalent to Algorithm~\ref{alg:general} with $\phi(\theta)=\sum_{(i,j)\in E}\sum_{o\in\ops}\theta_{i,j,o}\log\theta_{i,j,o}$, which is strongly-convex w.r.t. $\|\cdot\|_1/\sqrt{|E|}$ over the product of $|E|$ $|\ops|$-simplices.
The dual is $\sqrt{|E|}\|\cdot\|_\infty$, so if $G_{\*w}$ bounds the shared-weights gradient and we have an entry-wise bound on the architecture gradient then GAEA reach $\varepsilon$-stationarity in $\mathcal O(G_{\*w}^2+|E|)/\varepsilon^2$ iterations.
This can be up to a factor $|\ops|$ improvement over SGD, either over the simplex or the logit space.
In addition, GAEA encourages sparsity in the architecture weights by using a multiplicative update over simplices and not an additive update over $\R^{|E|\times|\ops|}$.
Obtaining sparse architecture parameters is critical for good performance, both for the mixture relaxation, where it alleviates the effect of discretization on the validation loss, and for the stochastic relaxation, where it reduces noise when sampling architectures.

\vspace{-1mm}
\section{Empirical Results using GAEA}\label{sec:empirical}
\vspace{-1mm}

We evaluate GAEA on three different computer vision benchmarks: 
the large and heavily studied search space from DARTS \citep{liu:19} and two smaller oracle evaluation benchmarks, NAS-Bench-1Shot1~\citep{zela:20b}, and NAS-Bench-201 \citep{dong:20}.
NAS-Bench-1Shot1 differs from the others by applying operations per node instead of per edge, while NAS-Bench-201 differs by not requiring edge-pruning.
Since GAEA can modify a variety of methods, e.g. DARTS, PC-DARTS \citep{xu:20}, and GDAS \citep{dong:19}, on each benchmark we start by evaluating the GAEA variant of the current best method on that benchmark.
We show that despite the diversity of search spaces, GAEA improves upon this state-of-the-art across all three.  
Note that we use the same step-size for GAEA variants of DARTS/PC-DARTS and do not require weight-decay on architecture parameters.
We defer experimental details and hyperparameter settings to the appendix and release all code, hyperparameters, and random seeds for reproducibility.

\vspace{-2mm}
\subsection{Convergence and Sparsity of GAEA}\label{ssec:emp_conv}
\vspace{-1mm}

We first examine the impact of GAEA on convergence and sparsity.
Figure~\ref{fig:entropy} shows the entropy of the operation weights averaged across nodes for a GAEA-variant and its base method across the three benchmarks, demonstrating that it decreases much faster for GAEA-modified approaches.
This validates our expectation that GAEA encourages sparse architecture parameters, which should alleviate the mismatch between the continuously relaxed architecture parameters and the discrete architecture returned.  
Indeed, we find that post-search discretization on the DARTS search space causes the validation accuracy of the PC-DARTS supernet to drop from 72.17\% to 15.27\%, while for GAEA PC-DARTS the drop is only  75.07\% to 33.23\%;
note that this is shared-weights accuracy, obtained {\em without} retraining the final network.
The numbers demonstrate that GAEA both (1) achieves better supernet optimization of the weight-sharing objective and (2) suffers less due to discretization.


\vspace{-2mm}
\subsection{GAEA on the DARTS Search Space}
\label{ssec:emp_darts}
\vspace{-1mm}

\begin{table}[!t]
\centering
\begin{threeparttable}
	\caption{\label{tab:cnn_sota}\textbf{DARTS:} Comparison with SOTA NAS methods on the DARTS search space, plus three results on different search spaces with a similar number of parameters reported at the top for comparison.
		All evaluations and reported performances of models found on the DARTS search space use similar training routines;
		this includes auxiliary towers and cutout but no other modifications, e.g. label smoothing \citep{muller:19}, AutoAugment \citep{cubuk:19}, Swish \citep{ramachandran:17}, Squeeze \& Excite \citep{hu:18}, etc.
		The specific training procedure we use is that of PC-DARTS, which differs slightly from the DARTS routine by a small change to the drop-path probability;
		PDARTS tunes both this and batch-size.
		Our results are averaged over 10 random seeds.
		Search cost is hardware-dependent; we used Tesla V100 GPUs.
		For more details see Tables~\ref{tab:cifar10} \&~\ref{tab:imagenet}.\vspace{-2mm}
	}
	\scriptsize
	\begin{tabular}{lccccccc}
				\toprule
				& \multicolumn{2}{c}{\textbf{CIFAR-10 Error}} & \textbf{Search Cost} & \multicolumn{2}{c}{\textbf{ImageNet Error}} & \textbf{Search Cost} \\  
				\textbf{Search Method (source)} & Best & Average & (GPU Days) & top-1 & top-5 & (GPU Days) & method \\
				\midrule
				NASNet-A$^*$ \citep{zoph:18} & - & 2.65  & 2000 & 26.0 & 8.4 & 1800 & RL \\
				AmoebaNet-B$^*$ \citep{real:19} & - & $2.55 \pm 0.05$ & 3150 & 24.3 & 7.6 & 3150 & evolution \\
				ProxylessNAS$^*$ \citep{cai:19} & 2.08 & - & 4 & 24.9 & 7.5 & 8.3 & gradient (WS) \\
				\midrule
				ENAS \citep{pham:18} & 2.89 & - & 0.5 & - & - & - & RL (WS) \\
				RS-WS$^\dagger$ \citep{li:19a} & 2.71 & $2.85\pm 0.08 $ & 0.7 & - & - & - & random (WS) \\
				ASNG \citep{akimoto:19} & -& $2.83\pm 0.14$ & 0.1 & - & - & - & gradient (WS) \\
				SNAS \citep{xie:19} & - & $2.85 \pm 0.02$ & 1.5& 27.3 & 9.2 & 1.5 & gradient (WS) \\
				DARTS (1st)$^\dagger$ \citep{liu:19} & - & $3.00 \pm 0.14$ & 0.4 & - & - & - & gradient (WS) \\
				DARTS (2nd)$^\dagger$ \citep{liu:19} & - & $2.76 \pm 0.09$ & 1 & 26.7 & 8.7 & 4.0 & gradient (WS) \\

				PDARTS \citep{chen:19b} & 2.50 & - & 0.3 & 24.4 & 7.4 & 0.3 & gradient (WS) \\			
				PC-DARTS$^\dagger$ \citep{xu:20} & - & $2.57\pm 0.07$ & 0.1 & 24.2 & 7.3 & 3.8 & gradient (WS) \\

				\textbf{GAEA} PC-DARTS$^\dagger$ (ours) & 2.39 & $2.50\pm 0.06$ & 0.1 & 24.0 & 7.3 & 3.8 & gradient (WS) \\
				PC-DARTS$^\dagger$ \citep{xu:20} & \multicolumn{3}{l}{(search on CIFAR-10, train on ImageNet)}& 25.1 & 7.8 & 0.1 & gradient (WS) \\
				\textbf{GAEA} PC-DARTS$^\dagger$ (ours)& \multicolumn{3}{l}{(search on CIFAR-10, train on ImageNet)}& 24.3 & 7.3 & 0.1 & gradient (WS) \\
				\bottomrule
			\end{tabular}
		\begin{tablenotes}\setlength\itemsep{-1mm}
				\item[$^*$] Search space/backbone differ from the DARTS setting; we show results for networks with a comparable number of parameters.
				\item[$^\dagger$] For fair comparison to other work, we show the search cost for training the shared-weights network with a single initialization.
		\end{tablenotes}
\end{threeparttable}
\end{table}



Here we evaluate GAEA on the task of designing CNN cells for CIFAR-10~\citep{krizhevsky:09} and ImageNet~\citep{russakovsky:15} by using it to modify PC-DARTS \citep{xu:20}, the current state-of-the-art method. 
We follow the same three stage process used by both DARTS and RS-WS for search and evaluation.  
Table~\ref{tab:cnn_sota} displays results on both datasets and demonstrates that GAEA's parameterization and optimization scheme improves upon PC-DARTS.
In fact, GAEA PC-DARTS outperforms all search methods except ProxylessNAS, which uses 1.5 times as many parameters on a different search space.
Thus we improve the state-of-the-art on the DARTS search space.
To meet a higher bar for reproducibility on CIFAR-10, in Appendix~\ref{app:empnas} we report ``broad reproducibility" \citep{li:19a} by repeating our pipeline with new seeds.
While GAEA PC-DARTS consistently finds good networks when selecting the best of four independent trials, multiple trials are required due to sensitivity to initialization, as is true for many approaches \citep{liu:19,xu:20}.

On ImageNet, we follow \citet{xu:20} by using subsamples containing $10\%$ and $2.5\%$ of the training images from ILSVRC-2012~\citep{russakovsky:15} as training and validation sets, respectively. 
We fix architecture parameters for the first 35 epochs, then run GAEA PC-DARTS with step-size 0.1. 
All other hyperparameters match those of \citet{xu:20}.
Table~\ref{tab:cnn_sota} shows the final performance of both the architecture found by GAEA PC-DARTS on CIFAR-10 and the one found directly on ImageNet when trained from scratch for 250 epochs using the same  settings as \citet{xu:20}.  
GAEA PC-DARTS achieves a top-1 test error of $24.0\%$, which is state-of-the-art performance in the mobile setting when excluding additional training modifications, e.g. those in the caption.
Additionally, the architecture found by GAEA PC-DARTS for CIFAR-10 and transferred achieves a test error of $24.2\%$, comparable to the $24.2\%$ error of the one found by PC-DARTS directly on ImageNet.  
Top architectures found by GAEA PC-DARTS are depicted in Figure~\ref{fig:gaea_pcdarts_archs} in the appendix.

\vspace{-2mm}
\subsection{GAEA on NAS-Bench-1Shot1}\label{ssec:emp_1shot1}
\vspace{-1mm}

\begin{figure}[!t]
\centering
\includegraphics[height=0.2175\textwidth]{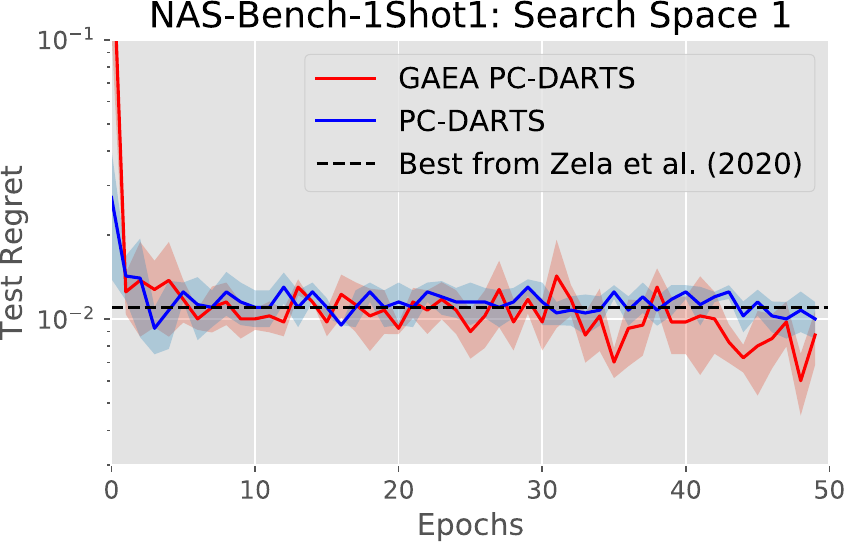}
\includegraphics[height=0.2175\textwidth]{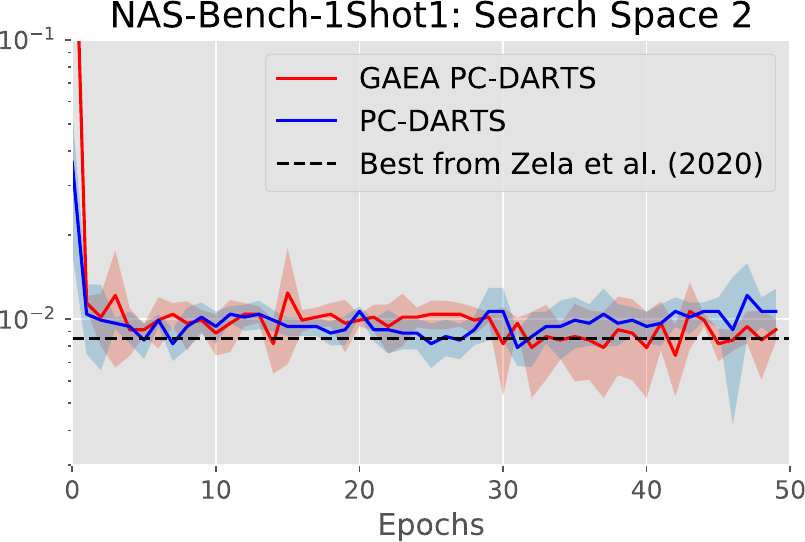}
\includegraphics[height=0.2175\textwidth]{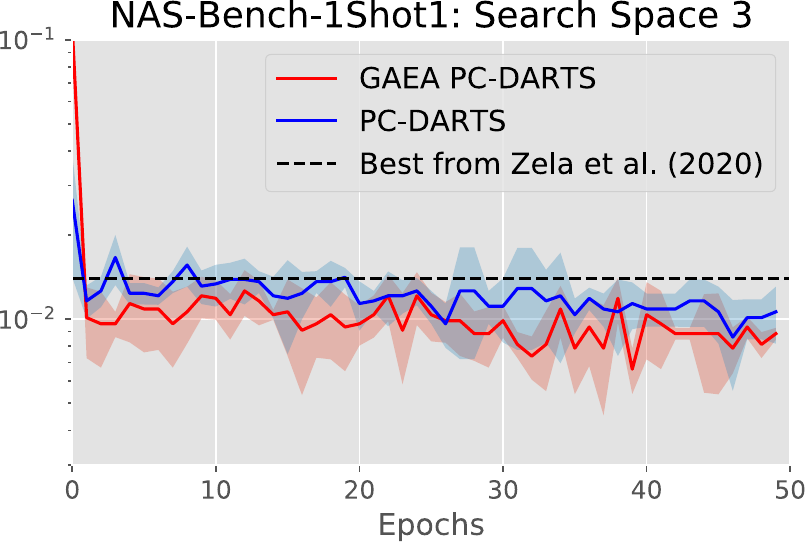}
\vspace{-4mm} 
\caption{\textbf{NAS-Bench-1Shot1:} 
	Online comparison of PC-DARTS and GAEA PC-DARTS in terms of the test regret at each epoch of shared-weights training, i.e. the difference between the ground truth test error of the proposed architecture and that of the best architecture in the search space.
	The dark lines indicate the mean of four random trials and the light colored bands $\pm$ one standard deviation.
	The dashed line is the final regret of the best weight-sharing method according to \citet{zela:20a};
	note that in our reproduction PC-DARTS performed better than their evaluation on spaces 1 and 3.
}
\label{fig:nasbench1shot1}
\end{figure} 

NAS-Bench-1Shot1 \citep{zela:20b} is a subset of NAS-Bench-101 \citep{ying:19} that allows benchmarking weight-sharing methods on three search spaces over CIFAR-10 that differ in the number of nodes considered and the number of input edges per node.  
Of the weight-sharing methods benchmarked by \citet{zela:20b}, we found that PC-DARTS achieves the best performance on 2 of 3 search spaces, so we again evaluate GAEA PC-DARTS here.
Figure~\ref{fig:nasbench1shot1} shows that GAEA PC-DARTS consistently finds better architectures on average than PC-DARTS and thus exceeds the performance of the best method from \citet{zela:20b} on 2 of 3 search spaces.
We hypothesize that the benefits of GAEA are limited here due to the near-saturation of NAS methods.
In particular, existing methods obtain within $1\%$ test error of the top network in each space, while the latters' test errors when evaluated with different initializations are $0.37\%$, $0.23\%$ and $0.19\%$, respectively.

\vspace{-2mm}
\subsection{GAEA on NAS-Bench-201}\label{ssec:emp_201}
\vspace{-1mm}

\begin{table}[!t] 
\centering
\scriptsize
\begin{threeparttable}
	\caption{\textbf{NAS-Bench-201:} 
		Results are separated into traditional hyperparameter optimization algorithms with search run on CIFAR-10 (top block), weight-sharing methods with search run on CIFAR-10 (middle block), and weight-sharing methods run directly on the dataset used for training (bottom block).
		The use of transfer NAS follows the evaluations conducted by \citet{dong:20}; unless otherwise stated all non-GAEA results are from their paper.
		The best results in the transfer and direct settings on each dataset are {\bf bolded}.\vspace{-2mm}
	}\label{tab:nasbench201}
	\begin{tabular}{clcccc}
		\toprule
		&& Search$^\ast$ (seconds)	& ~~~~CIFAR-10 (test)~~~~			& CIFAR-100	(test)		& ImageNet-16-120 (test) \\
		\midrule
		Regular & REA					& N/A		& $93.92\pm 0.30$ 	& $71.84\pm 0.99$ 	& $45.54\pm 1.03$ \\
		HO, & RS					& N/A		& $93.70\pm 0.36$	& $71.04\pm 1.08$	& $44.57\pm 1.25$ \\
		search on & REINFORCE 			& N/A		& $93.85\pm 0.37$ 	& $71.71\pm 1.09$	& $45.25\pm 1.18$ \\
		CIFAR-10 & BOHB				& N/A		& $93.61\pm 0.52$ 	& $70.85\pm 1.28$	& $44.42\pm 1.49$ \\
		\midrule
		& RSPS				& 7587		& $87.66\pm 1.69$	& $58.33\pm 4.34$	& $31.14\pm 3.88$ \\
		& DARTS	(bilevel)			& 35781		& $54.30\pm 0.00$	& $15.61\pm 0.00$ 	& $16.32\pm 0.00$ \\

		Weight & SETN				& 34139		& $87.64\pm 0.00$	& $59.05\pm 0.24$	& $32.52\pm 0.21$ \\
		sharing, & GDAS				& 31609		& $93.61\pm 0.09$ 	& $70.70\pm 0.30$	& $41.71\pm 0.98$ \\
		search on& DARTS$^\ddagger$ (bilevel) & 10683$^\dagger$ & $54.30\pm 0.00$ 	& $15.32\pm 0.00$	& $16.38\pm 0.00$ \\
		CIFAR-10 & \textbf{GAEA} DARTS (bilevel) & 7930$^\dagger$ & $91.63\pm 2.57$ 	& $68.39\pm 4.47$	& $41.59\pm 4.20$ \\
		& DARTS$^\ddagger$ (ERM) & 18112$^\dagger$ & $84.39\pm 3.82$ 	& $54.81\pm 7.08$	& $31.82\pm 4.78$ \\
		& \textbf{GAEA} DARTS (ERM) & 9061$^\dagger$ & $\mathbf{94.10\pm 0.29}$ 	& $\mathbf{72.60\pm 0.89}$	& $\mathbf{45.81\pm 0.51}$ \\
		\midrule
		& GDAS$^\ddagger$			& 27923$^\dagger$ 	& $93.52\pm 0.15$ 	& $67.52\pm 0.15$	& $40.91\pm 0.12$ \\
		Weight & \textbf{GAEA} GDAS				& 16754$^\dagger$ 	& $93.55\pm 0.13$	& $70.47\pm 0.47$	& $40.91\pm 0.12$ \\
		sharing, & DARTS$^\ddagger$ (bilevel) 				& 10683$^\dagger$	& $54.30\pm 0.00$ 	& $15.32\pm 0.00$ 	& $28.96\pm10.22$ \\
		direct & \textbf{GAEA} DARTS (bilevel) 	& 7930$^\dagger$	& $91.63\pm 2.57$	& $71.87\pm 0.57$	& $45.69\pm 0.56$ \\
		search & DARTS$^\ddagger$ (ERM) 					& 18112$^\dagger$ 	& $84.39\pm 3.82$ 	& $51.26\pm 6.14$ 	& $31.35\pm 7.46$ \\
		& \textbf{GAEA} DARTS (ERM) 		& 9061$^\dagger$ 	& $\mathbf{94.10\pm 0.29}$	& $\mathbf{73.43\pm 0.13}$	&  $\mathbf{46.36\pm 0.00}$ \\
		\midrule
		& ResNet				& N/A		& $93.97$			& $70.86$			& $43.63$ \\
		& Optimal				& N/A 		& $94.37$			& $73.51$			& $47.31$ \\
		\bottomrule
	\end{tabular}
	\begin{tablenotes}\setlength\itemsep{-1mm}
		\item[$^\ast$] Search cost reported for running the search algorithm on CIFAR-10.
		\item[$^\dagger$] Search cost measured on NVIDIA P100 GPUs.
		\item[$^\ddagger$] Our reproduction or implementation of a non-GAEA method.
	\end{tablenotes}
\end{threeparttable}
\end{table}

NAS-Bench-201 has one search space on three datasets---CIFAR-10, CIFAR-100, and ImageNet-16-120---that includes 4-node architectures with an operation from $\ops=\{$none, skip connect, 1x1 convolution, 3x3 convolution, 3x3 avg pool$\}$ on each edge, yielding 15625 possible networks.
\citet{dong:20} report results for several algorithms in the {\em transfer NAS} setting, where search is conducted on CIFAR-10 and the resulting networks are trained on a possibly different target dataset.
Table~\ref{tab:nasbench201} reports a subset of these results alongside evaluations of our implementation of several existing and GAEA-modified NAS methods in both the transfer and direct setting. 
Both the results from \citet{dong:20} and our reproductions show that GDAS is the best previous weight-sharing method;
we evaluate GAEA GDAS and find that it achieves better results on CIFAR-100 and similar results on the other two datasets.

Since we are interested in improving upon not only GAEA GDAS but also upon traditional hyperparameter optimization methods, we also investigate the performance of GAEA applied to first-order DARTS. 
We evaluate GAEA DARTS with both single-level (ERM) and bilevel optimization; 
recall that in the latter case we optimize architecture parameters w.r.t. the validation loss and the shared weights w.r.t. the training loss, whereas in ERM there is no data split.
GAEA DARTS (ERM) achieves state-of-the-art performance on all three datasets in both the transfer and direct setting, exceeding the test accuracy of both weight-sharing and traditional hyperparameter tuning by a wide margin.
GAEA DARTS (bilevel) performs worse but still exceeds all other methods on CIFAR-100 and ImageNet-16-120 in the direct search setting.
The result thus also confirms the relevance of studying the single-level case to understand NAS;
notably, the DARTS (ERM) baseline also improves substantially upon the DARTS (bilevel) baseline.
  


\vspace{-1mm}
\section{Conclusion}
\vspace{-1mm}

In this paper we take an optimization-based view of NAS, arguing that the design of good NAS algorithms is largely a matter of successfully optimizing and regularizing the supernet.
In support of this, we develop GAEA, a simple modification of gradient-based NAS that attains state-of-the-art performance on several computer vision benchmarks while enjoying favorable speed and sparsity properties.
We believe that obtaining high-performance NAS algorithms for a wide variety of applications will continue to require a similar co-design of search space parameterizations and optimization methods, and that our geometry-aware framework can help accelerate this process.
In particular, most modern NAS algorithms search over products of categorical decision spaces, to which our approach is directly applicable.
More generally, as the field moves towards more ambitious search spaces, e.g. full-network topologies or generalizations of operations such as convolution or attention, these developments may result in new architecture domains for which our work can inform the design of appropriate, geometry-aware optimization methods.

\vspace{-1mm}
\section*{Acknowledgments}
\vspace{-1mm}

We thank Jeremy Cohen, Jeffrey Li, and Nicholas Roberts for helpful feedback.
This work was supported in part by DARPA under cooperative agreements FA875017C0141 and HR0011202000, NSF grants CCF-1535967, CCF-1910321, IIS-1618714, IIS-1705121, IIS-1838017, IIS-1901403, and IIS-2046613, a Microsoft Research Faculty Fellowship, a Bloomberg Data Science research grant, an Amazon Research Award, an AWS Machine Learning Research Award, a Facebook Faculty Research Award, funding from Booz Allen Hamilton Inc., a Block Center Grant, a Carnegie Bosch Institute Research Award, and a Two Sigma Fellowship Award. Any opinions, findings and conclusions, or recommendations expressed in this material are those of the authors and do not necessarily reflect the views of DARPA, NSF, or any other funding agency.

\bibliography{references}
\bibliographystyle{iclr2021_conference}

\appendix

\newpage
\section{Background on NAS with Weight-Sharing}\label{ssec:nas}

Here we review the NAS setup motivating our work.
Weight-sharing methods almost exclusively use micro cell-based search spaces for their tractability and additional structure \citep{pham:18,liu:19}. 
These search spaces can be represented as directed acyclic graphs (DAGs)
with a set of ordered nodes $N$ and edges $E$. 
Each node $x^{(i)}\in N$ is a feature representation and each edge $(i,j)\in E$ is associated with an operation on the feature of node $j$ passed to node $i$ and aggregated with other inputs to form $x^{(j)}$, with the restriction that a given node $j$ can only receive edges from prior nodes as input.
Hence, the feature at node $i$ is $x^{(i)}=\sum_{j<i}o^{(i,j)}(x^{(j)})$.
Search spaces are specified by the number of nodes, the number of edges per node, and the set of operations $\ops$ that can be applied at each edge.
Thus for NAS, $\A\subset\{0,1\}^{|E|\times|\ops|}$  is the set of all valid architectures for encoded by edge and operation decisions.
Treating both the {\em shared weights} $\*w\in\R^d$ and architecture decisions $a\in\A$ as parameters, weight-sharing methods train a single network subsuming all possible functions within the search space.

Gradient-based weight-sharing methods apply continuous relaxations to the
architecture space $\A$ in order to compute gradients in a continuous space $\Theta$.
Methods like DARTS \citep{liu:19} and its variants 
\citep{chen:19b,laube:19,hundt:19,liang:19,noy:19,nayman:19} relax the search space by considering a {\em mixture} of operations per edge.
For example, we will consider a relaxation where the architecture space $\A=\{0,1\}^{|E|\times|\ops|}$ is relaxed into $\Theta=[0,1]^{|E|\times|\ops|}$ with the constraint that $\sum_{o\in\ops}\theta_{i,j,o}=1$, i.e. the operation weights on each edge sum to 1.
The feature at node $i$ is then $x^{(i)}=\sum_{j<i}\sum_{o\in\ops}\theta_{i,j,o}o(x^{(j)})$.
To get a valid architecture $a\in\A$ from a mixture $\theta$, rounding and pruning are typically employed after the search phase.

An alternative, {\em stochastic} approach, such as that used by GDAS \citep{dong:19}, instead uses $\Theta$-parameterized distributions $p_\theta$ over $\A$ to sample architectures \citep{pham:18,xie:19,akimoto:19,cai:19};
unbiased gradients w.r.t. $\theta\in\Theta$ can be computed using Monte Carlo sampling.
The goal of all these relaxations is to use simple gradient-based approaches to approximately optimize \eqref{eq:nas} over $a\in\A$ by optimizing \eqref{eq:bilevel} over $\theta\in\Theta$ instead.
However, both the relaxation and the optimizer critically affect the convergence speed and solution quality.
We next present a principled approach for understanding both mixture and stochastic methods.

\newpage
\section{Optimization}\label{app:opt}

This section contains proofs and generalizations of the non-convex optimization results in Section~\ref{sec:opt}.
Throughout this section, $\V$ denotes a finite-dimensional real vector space with Euclidean inner product $\langle\cdot,\cdot\rangle$, $\R_+$ denotes the set of nonnegative real numbers, and $\overline\R$ denotes the set of extended real numbers $\R\cup\{\pm\infty\}$.

\subsection{Preliminaries}\label{app:prelim}

\begin{Def}\label{app:def:convex}
	Consider a closed and convex subset $\X\subset\V$.
	For any $\alpha>0$ and norm $\|\cdot\|:\X\mapsto\R_+$ an everywhere-subdifferentiable function $f:\X\mapsto\overline\R$ is called {\bf$\alpha$-strongly-convex} w.r.t. $\|\cdot\|$ if $\forall~\*x,\*y\in\X$ we have
	$$f(\*y)\ge f(\*x)+\langle\nabla f(\*x),\*y-\*x\rangle+\frac\alpha2\|\*y-\*x\|^2$$
\end{Def}

\begin{Def}\label{app:def:smooth}
	Consider a closed and convex subset $\X\subset\V$.
	For any $\beta>0$ and norm $\|\cdot\|:\X\mapsto\R_+$ an continuously-differentiable function $f:\X\mapsto\R$ is called {\bf$\beta$-strongly-smooth} w.r.t. $\|\cdot\|$ if $\forall~\*x,\*y\in\X$ we have
	$$f(\*y)\le f(\*x)+\langle\nabla f(\*x),\*y-\*x\rangle+\frac\beta2\|\*y-\*x\|^2$$
\end{Def}


\begin{Def}\label{app:def:bregman}
	Let $\X$ be a closed and convex subset of $\V$.
	The {\bf Bregman divergence} induced by a strictly convex, continuously-differentiable {\bf distance-generating function (DGF)} $\phi:\X\mapsto\overline\R$ is 
	$$\Breg_\phi(\*x||\*y)=\phi(\*x)-\phi(\*y)-\langle\nabla\phi(\*y),\*x-\*y\rangle~\forall~\*x,\*y\in\X$$
	By definition, the Bregman divergence satisfies the following properties:
	\begin{enumerate}
		\item $\Breg_\phi(\*x||\*y)\ge0~\forall~x,y\in\X$ and $\Breg_\phi(\*x||\*y)=0\iff x=y$.
		\item If $\phi$ is $\alpha$-strongly-convex w.r.t. norm $\|\cdot\|$ then so is $\Breg_\phi(\cdot||\*y)~\forall~\*y\in\X$.
		Furthermore, $\Breg_\phi(\*x||\*y)\ge\frac\alpha2\|\*x-\*y\|^2~\forall~\*x,\*y\in\X$.
		\item If $\phi$ is $\beta$-strongly-smooth w.r.t. norm $\|\cdot\|$ then so is $\Breg_\phi(\cdot||\*y)~\forall~\*y\in\X$.
		Furthermore, $\Breg_\phi(\*x||\*y)\le\frac\beta2\|\*x-\*y\|^2~\forall~\*x,\*y\in\X$.
	\end{enumerate}
\end{Def}

\begin{Def}\label{app:def:weak}\citep[Definition~2.1]{zhang:18}
	Consider a closed and convex subset $\X\subset\V$.
	For any $\gamma>0$ and $\phi:\X\mapsto\overline\R$ an everywhere-subdifferentiable function $f:\X\mapsto\R$ is called {\bf $\gamma$-relatively-weakly-convex ($\gamma$-RWC)} w.r.t. $\phi$ if $f(\cdot)+\gamma\phi(\cdot)$ is convex on $\X$.
\end{Def}

\begin{Def}\label{app:def:prox}\citep[Definition~2.3]{zhang:18}
	Consider a closed and convex subset $\X\subset\V$.
	For any $\lambda>0$, function $f:\X\mapsto\R$, and DGF $\phi:\X\mapsto\overline\R$ the {\bf Bregman proximal operator} of $f$ is
	$$\prox_\lambda(\*x)=\argmin_{\*u\in\X}\lambda f(\*u)+\Breg_\phi(\*u||\*x)$$
\end{Def}

\begin{Def}\label{app:def:stat}\citep[Equation~2.11]{zhang:18}
	Consider a closed and convex subset $\X\subset\V$.
	For any $\lambda>0$, function $f:\X\mapsto\R$, and DGF $\phi:\X\mapsto\overline\R$ the {\bf Bregman stationarity} of $f$ at any point $\*x\in\X$ is
	$$\Delta_\lambda(\*x)=\frac{\Breg_\phi(\*x||\prox_\lambda(\*x))+\Breg_\phi(\prox_\lambda(\*x)||\*x)}{\lambda^2}$$
\end{Def}

\newpage
\subsection{Results}\label{app:general}

Throughout this subsection let $\V=\bigtimes_{i=1}^b\V_i$ be a product space of $b$ finite-dimensional real vector spaces $\V_i$, each with an associated norm $\|\cdot\|_i:\V_i\mapsto\R_+$, and $\X=\bigtimes_{i=1}^b\X_i$ be a product set of $b$ subsets $\X_i\subset\V_i$, each with an associated 1-strongly-convex DGF $\phi_i:\X_i\mapsto\overline\R$ w.r.t. $\|\cdot\|_i$.
For each $i\in[b]$ will use $\|\cdot\|_{i,*}$ to denote the dual norm of $\|\cdot\|_i$ and for any element $\*x\in\X$ we will use $\*x_i$ to denote its component in block $i$ and $\*x_{-i}$ to denote the component across all blocks other than $i$.
Define the functions $\|\cdot\|:\V\mapsto\R_+$ and $\|\cdot\|_*\V\mapsto\R_+$ for any $\*x\in\V$ by $\|\*x\|^2=\sum_{i=1}^b\|\*x_i\|_i^2$ and $\|\*x\|_*^2=\sum_{i=1}^b\|\*x_i\|_{i,*}^2$, respectively, and the function $\phi:\X\mapsto\overline\R$ for any $\*x\in\X$ by $\phi(\*x)=\sum_{i=1}^b\phi_i(\*x)$.
Finally, for any $n\in\N$ we will use $[n]$ to denote the set $\{1,\dots,n\}$.

\begin{Set}\label{app:set:general}
	For some fixed constants $\gamma_i,L_i>0$ for each $i\in[b]$ we have the following:
	\begin{enumerate}
		\item $f:\X\mapsto\R$ is everywhere-subdifferentiable with minimum $f^*>-\infty$ and for all $\*x\in\X$ and each $i\in[b]$ the restriction $f(\cdot,\*x_{-i})$ is $\gamma_i$-RWC w.r.t. $\phi_i$.
		\item For each $i\in[b]$ there exists a stochastic oracle $G_i$ that for input $\*x\in\X$ outputs a random vector $G_i(\*x,\xi)$ s.t. $\E_\xi G_i(\*x,\xi)\in\partial_if(\*x)$, where $\partial_if(\*x)$ is the subdifferential set of the restriction $f(\cdot,\*x_{-i})$ at $\*x_i$.
		Moreover, $\E_\xi\|G_i(\*x,\xi)\|_{i,*}^2\le L_i^2$.
	\end{enumerate}
	Define $\gamma=\max_{i\in[b]}\gamma_i$ and $L^2=\sum_{i=1}^bL_i^2$.
\end{Set}

\begin{Clm}\label{app:clm:norm}
	$\|\cdot\|$ is a norm on $\V$.
\end{Clm}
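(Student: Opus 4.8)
The plan is to verify directly the three defining properties of a norm---definiteness, absolute homogeneity, and the triangle inequality---for the function $\|\cdot\|$ on $\V$ defined by $\|\*x\|^2=\sum_{i=1}^b\|\*x_i\|_i^2$. The key structural observation driving the argument is that $\|\cdot\|$ factors as the composition of two maps: first the block-norm map $\Phi:\V\mapsto\R^b$ sending $\*x\mapsto(\|\*x_1\|_1,\dots,\|\*x_b\|_b)$, and then the standard Euclidean ($\ell_2$) norm on $\R^b$. Since the Euclidean norm is already known to be a norm, most of the work reduces to transferring its properties back through $\Phi$, using that each $\|\cdot\|_i$ is itself a norm on $\V_i$.

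First I would dispatch the two easy axioms. For definiteness, $\|\*x\|^2=\sum_{i=1}^b\|\*x_i\|_i^2\ge0$ as a sum of squares, and it vanishes if and only if every $\|\*x_i\|_i=0$, which by definiteness of each $\|\cdot\|_i$ forces $\*x_i=\*0$ for all $i$, i.e. $\*x=\*0$. For absolute homogeneity, given a scalar $c$ I would compute $\|c\*x\|^2=\sum_{i=1}^b\|c\*x_i\|_i^2=\sum_{i=1}^b|c|^2\|\*x_i\|_i^2=|c|^2\|\*x\|^2$, using homogeneity of each block norm, and take square roots.

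The triangle inequality is the crux, and I would establish it in two chained steps. Fix $\*x,\*y\in\V$ and set $a_i=\|\*x_i\|_i$ and $b_i=\|\*y_i\|_i$. Step one applies the triangle inequality for each block norm, $\|\*x_i+\*y_i\|_i\le a_i+b_i$; since both sides are nonnegative, squaring and summing gives $\|\*x+\*y\|^2=\sum_{i=1}^b\|\*x_i+\*y_i\|_i^2\le\sum_{i=1}^b(a_i+b_i)^2$. Step two recognizes the right-hand side as the squared Euclidean norm of the vector $(a_i)+(b_i)\in\R^b$, so that by the triangle inequality for the $\ell_2$-norm on $\R^b$ we obtain $\sqrt{\sum_{i=1}^b(a_i+b_i)^2}\le\sqrt{\sum_{i=1}^ba_i^2}+\sqrt{\sum_{i=1}^bb_i^2}=\|\*x\|+\|\*y\|$. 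Combining the two steps via monotonicity of the square root yields $\|\*x+\*y\|\le\|\*x\|+\|\*y\|$.

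The one place demanding a little care---and what I expect to be the main (if minor) obstacle---is the passage in step one from the component-wise inequalities $\|\*x_i+\*y_i\|_i\le a_i+b_i$ to the inequality between their squared sums. This relies on the monotonicity of $t\mapsto t^2$ on $\R_+$, which is legitimate precisely because both $\|\*x_i+\*y_i\|_i$ and $a_i+b_i$ are nonnegative; I would make this nonnegativity explicit rather than silently squaring inequalities. Everything else is a direct application of properties already guaranteed for the individual block norms $\|\cdot\|_i$ and for the Euclidean norm on $\R^b$.
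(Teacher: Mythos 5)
Your proof is correct and uses essentially the same ingredients as the paper's: the block-wise triangle inequality for each $\|\cdot\|_i$, the monotonicity of the Euclidean norm on $\R^b$ in its nonnegative arguments, and the Euclidean triangle inequality. The paper merely packages these as a convexity statement (proving $\|\lambda\*x+(1-\lambda)\*y\|\le\lambda\|\*x\|+(1-\lambda)\|\*y\|$ and then setting $\lambda=\tfrac12$), whereas you unroll the same chain directly, which is if anything slightly cleaner.
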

\begin{proof}
	Positivity and homogeneity are trivial.
	For the triangle inequality, note that for any $\lambda\in[0,1]$ and any $\*x,\*y\in\X$ we have that
	\begin{align*}
		\|\lambda\*x+(1-\lambda)\*y\|
		&=\sqrt{\sum_{i=1}^b\|\lambda\*x_i+(1-\lambda)\*y_i\|_i^2}\\
		&\le\sqrt{\sum_{i=1}^b(\lambda\|\*x_i\|_i+(1-\lambda)\|\*y_i\|_i)^2}\\
		&\le\lambda\sqrt{\sum_{i=1}^b\|\*x_i\|_i^2}+(1-\lambda)\sqrt{\sum_{i=1}^b\|\*y_i\|_i^2}\\
		&=\lambda\|\*x\|+(1-\lambda)\|\*y\|
	\end{align*}
	where the first inequality follows by convexity of the norms $\|\cdot\|_i~\forall~i\in[b]$ and the fact that the Euclidean norm on $\R^b$ is nondecreasing in each argument, while the second inequality follows by convexity of the Euclidean norm on $\R^b$.
	Setting $\lambda=\frac12$ and multiplying both sides by 2 yields the triangle inequality.
\end{proof}

\newpage
\begin{Clm}\label{app:clm:conjugate}
	$\frac12\|\cdot\|_*^2$ is the convex conjugate of $\frac12\|\cdot\|^2$.
\end{Clm}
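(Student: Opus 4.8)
The plan is to compute the convex conjugate of $\frac12\|\cdot\|^2$ directly from the definition and check that it equals $\frac12\|\cdot\|_*^2$, exploiting the fact that both functions split additively over the product space $\V=\bigtimes_{i=1}^b\V_i$. Recall that the convex conjugate of $\frac12\|\cdot\|^2$ evaluated at a point $\*y\in\V$ is $\sup_{\*x\in\V}\langle\*y,\*x\rangle-\frac12\|\*x\|^2$. Since $\langle\*y,\*x\rangle=\sum_{i=1}^b\langle\*y_i,\*x_i\rangle$ and, by the definition of the product norm from Claim~\ref{app:clm:norm}, $\|\*x\|^2=\sum_{i=1}^b\|\*x_i\|_i^2$, the objective separates into a sum of $b$ terms, the $i$-th of which depends only on the block variable $\*x_i\in\V_i$. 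First I would use this separability to write the single supremum over $\V$ as a sum of $b$ independent suprema, one over each factor $\V_i$; this is valid precisely because the feasible region $\V$ is a product and the objective is additively separable across blocks.

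Next I would reduce each per-block supremum to the classical identity that the conjugate of a squared norm is the squared dual norm, namely $\sup_{\*x_i\in\V_i}\langle\*y_i,\*x_i\rangle-\frac12\|\*x_i\|_i^2=\frac12\|\*y_i\|_{i,*}^2$. I would establish this by a radial argument: writing a nonzero $\*x_i$ as $t\*u$ with $t=\|\*x_i\|_i\ge0$ and $\|\*u\|_i=1$, the objective becomes $t\langle\*y_i,\*u\rangle-\frac12t^2$, and maximizing over unit-norm directions $\*u$ gives $\sup_{\|\*u\|_i=1}\langle\*y_i,\*u\rangle=\|\*y_i\|_{i,*}$ by the definition of the dual norm. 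What remains is the scalar maximization
\[
\sup_{t\ge0}\ t\|\*y_i\|_{i,*}-\tfrac12t^2,
\]
a concave quadratic whose unconstrained maximizer $t=\|\*y_i\|_{i,*}\ge0$ already lies in the feasible region $t\ge0$, so the value is $\frac12\|\*y_i\|_{i,*}^2$. The degenerate case $\*y_i=0$ is consistent, since then both sides vanish.

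Summing the per-block values then recovers $\sum_{i=1}^b\frac12\|\*y_i\|_{i,*}^2=\frac12\|\*y\|_*^2$ by the definition of $\|\cdot\|_*$, which identifies the conjugate and proves the claim. I do not expect any genuine obstacle here, as the computation is essentially mechanical; the only points demanding care are (i) justifying that the supremum over the product space $\V$ distributes over the blocks, and (ii) confirming in the radial step that the supremum is attained at a feasible $t\ge0$ rather than at a boundary, which is what makes the squared dual norm appear rather than a truncated expression.
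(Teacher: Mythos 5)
Your proposal is correct and takes essentially the same route as the paper: both exploit the blockwise separability of $\langle\*u,\*x\rangle-\frac12\|\*x\|^2$ over the product space and then invoke the classical identity that the conjugate of half a squared norm is half the squared dual norm, with the paper's ``maximize each term w.r.t.\ $\|\*x_i\|_i$'' step being exactly your scalar maximization of $t\|\*y_i\|_{i,*}-\frac12t^2$. The only cosmetic difference is that the paper argues via separate upper and lower bounds (dual-norm inequality plus an explicit norming vector) where you use a single radial parameterization; both are sound.
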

\begin{proof}
	Consider any $\*u\in\V$.
	To upper-bound the convex conjugate note that
	\begin{align*}
		\sup_{\*x\in\V}\langle\*u,\*x\rangle-\frac{\|\*x\|^2}2
		&=\sup_{\*x\in\V}\sum_{i=1}^b\langle\*u_i,\*x_i\rangle-\frac{\|\*x_i\|_i^2}2\\
		&\le\sup_{\*x\in\V}\sum_{i=1}^b\|\*u_i\|_{i,*}\|\*x_i\|_i-\frac{\|\*x_i\|_i^2}2\\
		&=\frac12\sum_{i=1}^b\|\*u_i\|_{i,*}^2\\
		&=\frac{\|\*u\|_*^2}2
	\end{align*}
	where the first inequality follows by definition of a dual norm and the second by maximizing each term w.r.t. $\|\*x_i\|_i$.
	For the lower bound, pick $\*x\in\V$ s.t. $\langle\*u_i,\*x_i\rangle=\|\*u_i\|_{i,*}\|\*x_i\|_i$ and $\|\*x_i\|_i=\|\*u_i\|_{i,*}~\forall~i\in[b]$, which must exist by the definition of a dual norm.
	Then
	$$
	\langle\*u,\*x\rangle-\frac{\|\*x\|^2}2
	=\sum_{i=1}^b\langle\*u_i,\*x_i\rangle-\frac{\|\*x_i\|_i^2}2
	=\frac12\sum_{i=1}^b\frac{\|\*u_i\|_{i,*}^2}2
	=\frac{\|\*u\|_*^2}2
	$$
	so $\sup_{\*x\in\V}\langle\*u,\*x\rangle-\frac12\|\*x\|^2\ge\frac12\|\*u\|_*^2$, completing the proof.
\end{proof}

\begin{algorithm}[!t]
	\caption{
		Block-stochastic mirror descent over $\X=\bigtimes_{i=1}^b\X_i$ given associated DGFs $\phi_i:\X_i\mapsto\overline\R$.
	}\label{app:alg:general}
	\DontPrintSemicolon
	\KwIn{initialization $\*x^{(1)}\in\X$, number of steps $T\ge1$, step-size sequence $\{\eta_t\}_{t=1}^T$}
	\For{{\em iteration} $t\in[T]$}{
		sample $i\sim\Unif[b]$\\
		set $\*x_{-i}^{(t+1)}=\*x_{-i}^{(t)}$\\
		get $\*g=G_i(\*x^{(t)},\xi_t)$\\
		set $\*x_i^{(t+1)}=\argmin_{\*u\in\X_i}\eta_t\langle\*g,\*u\rangle+\Breg_{\phi_i}(\*u||\*x_i^{(t)})$\\
	}
	\KwOut{$\*{\hat x}=\*x^{(t)}$ w.p. $\frac{\eta_t}{\sum_{t=1}^T\eta_t}$.}
\end{algorithm}

\newpage
\begin{Thm}\label{app:thm:general}
	Let $\hat x$ be the output of Algorithm~\ref{app:alg:general} after $T$ iterations with non-increasing step-size sequence $\{\eta_t\}_{t=1}^T$.
	Then under Setting~\ref{app:set:general}, for any $\hat\gamma>\gamma$ we have that
	$$
	\E\Delta_\frac1{\hat\gamma}(\*{\hat x})\le\frac{\hat\gamma b}{\hat\gamma-\gamma}\frac{\min_{\*u\in\X}f(\*u)+\hat\gamma\Breg_\phi(\*u||\*x^{(1)})-f^*+\frac{\hat\gamma L^2}{2b}\sum_{t=1}^T\eta_t^2}{\sum_{t=1}^T\eta_t}
	$$
	where the expectation is w.r.t. $\xi_t$ and the randomness of the algorithm.
\end{Thm}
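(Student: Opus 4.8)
The plan is to reduce Algorithm~\ref{app:alg:general} to an ordinary (single-block) stochastic mirror descent on the whole product space $\X$ with the aggregate DGF $\phi=\sum_{i=1}^b\phi_i$, and then quote the single-block guarantee \citep[Theorem~3.1]{zhang:18}. First I would record the structural facts that make this reduction exact. Because $\phi$ is separable, its Bregman divergence splits as $\Breg_\phi(\*u||\*v)=\sum_{i=1}^b\Breg_{\phi_i}(\*u_i||\*v_i)$; combined with the second property in Definition~\ref{app:def:bregman} and the identity $\|\*x\|^2=\sum_i\|\*x_i\|_i^2$ this yields $\Breg_\phi(\*u||\*v)\ge\tfrac12\|\*u-\*v\|^2$, i.e. $\phi$ is $1$-strongly-convex with respect to $\|\cdot\|$, which is a genuine norm by Claim~\ref{app:clm:norm}. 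Separability also means that a mirror-descent step touching only block $i$ (with $\*x_{-i}$ held fixed, exactly as in Algorithm~\ref{app:alg:general}) coincides with a full-space mirror-descent step whose gradient is the zero-padded lift $\iota_i(\*g)$ of $\*g$, where $\iota_i:\V_i\mapsto\V$ embeds block $i$ and sets all other blocks to zero.

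Next I would absorb the random block choice into a single unbiased oracle on $\V$. Define the stochastic gradient that, on drawing $i\sim\Unif[b]$ and $\xi$, returns $b\,\iota_i(G_i(\*x,\xi))$. By the second condition of Setting~\ref{app:set:general} its expectation is $\frac1b\sum_{i=1}^b b\,\iota_i(\E_\xi G_i(\*x,\xi))=\sum_i\iota_i(\partial_i f(\*x))$, a subgradient of $f$, so the oracle is unbiased; and since only the sampled block is nonzero, its squared dual norm equals $b^2\|G_i\|_{i,*}^2$, giving a second moment at most $b^2\cdot\frac1b\sum_i L_i^2=bL^2$. Pulling the factor $b$ out of the gradient and into the step size shows that running Algorithm~\ref{app:alg:general} with step $\eta_t$ is \emph{identical} to single-block stochastic mirror descent driven by this unbiased oracle with effective step size $\eta_t/b$; the output law $\propto\eta_t$ is invariant under this common rescaling.

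Then I would invoke \citep[Theorem~3.1]{zhang:18} with DGF $\phi$, relative-weak-convexity constant $\gamma=\max_i\gamma_i$, second-moment bound $\sigma^2=bL^2$, and step sizes $\eta_t/b$. Claim~\ref{app:clm:conjugate} is precisely the Fenchel--Young inequality that, together with the $1$-strong-convexity of $\phi$ (hence $\Breg_\phi(\cdot||\*x)\ge\tfrac12\|\cdot-\*x\|^2$), converts the inner-product noise terms of that analysis into the $\tfrac12\sigma^2$ second-moment contributions. Substituting $\sum_t(\eta_t/b)=\frac1b\sum_t\eta_t$ and $\sum_t(\eta_t/b)^2=\frac1{b^2}\sum_t\eta_t^2$ into the single-block bound preserves the characteristic factor $\frac{\hat\gamma}{\hat\gamma-\gamma}$, turns the initial Moreau-envelope gap into $\frac1{\hat\gamma}\big(\min_{\*u\in\X}f(\*u)+\hat\gamma\Breg_\phi(\*u||\*x^{(1)})-f^*\big)$, and after collecting the powers of $b$ and $\hat\gamma$ recovers the stated bound.

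The step I expect to require the most care is the well-posedness of the target quantity itself: $\Delta_{1/\hat\gamma}$ is built from the joint proximal operator $\prox_{1/\hat\gamma}$ (Definitions~\ref{app:def:prox} and~\ref{app:def:stat}), whose uniqueness needs $f$ to be jointly $\hat\gamma$-RWC with respect to $\phi$, whereas the separable Setting~\ref{app:set:general} supplies only blockwise RWC, which yields separate but not joint convexity of $f+\hat\gamma\phi$. I would resolve this either by strengthening the setting to joint RWC---consistent with the main-text Assumption, which already posits joint convexity of the regularized objective---or by checking that the single-block analysis uses only inequalities that survive under blockwise RWC together with the separability of $\phi$. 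Everything else is the routine bookkeeping of the $b$- and $\hat\gamma$-dependent constants, which the reduction above reduces to substitution.
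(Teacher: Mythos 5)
Your proposal follows essentially the same route as the paper's proof: the zero-padded lift $\iota_i$ is the paper's transform $\*U_i$, the unbiased oracle $b\,\iota_i(G_i(\*x,\xi))$ with second moment $bL^2$ and effective step size $\eta_t/b$ is exactly the reduction used, and the conclusion is likewise obtained by invoking \citep[Theorem~3.1]{zhang:18} via Claims~\ref{app:clm:norm} and~\ref{app:clm:conjugate}. The one point you flag---that blockwise relative weak convexity does not formally yield the joint $\gamma$-RWC of $f$ w.r.t.\ $\phi$ that the single-block theorem requires---is a gap the paper's own proof passes over silently (it simply asserts ``$f$ is $\gamma$-RWC w.r.t.\ $\phi$''), so your suggestion to strengthen the setting to joint RWC, in line with Assumption~\ref{asu:general}, is a reasonable repair rather than a deviation.
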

\begin{proof}
	Define transforms $\*U_i,i\in[b]$ s.t. $\*U_i^T\*x=\*x_i$ and $\*x=\sum_{i=1}^b\*U_i\*x_i~\forall~\*x\in\X$.
	Let $G$ be a stochastic oracle that for input $\*x\in\X$ outputs $G(\*x,i,\xi)=b\*U_iG_i(\*x,\xi)$.
	This implies $\E_{i,\xi}G(\*x,i,\xi)=\frac1b\sum_{i=1}^bb\*U_i\E_\xi G_i(\*x,\xi)\in\sum_{i=1}^b\*U_i\partial_if(\*x) =\partial f(\*x)$ and $\E_{i,\xi}\|G(\*x,i,\xi)\|_*^2=\frac1b\sum_{i=1}^bb^2\E_\xi\|\*U_iG_i(\*x,\xi)\|_{i,*}^2\le b\sum_{i=1}^bL_i^2=bL^2$.
	Then
	\begin{align*}
	\*x^{(t+1)}
	&=\*U_i\left[\argmin_{\*u\in\X_i}\eta_t\langle g,u\rangle+\Breg_{\phi_i}(\*u||\*x_i^{(t)})\right]\\
	&=\*U_i\*U_i^T\left[\argmin_{\*u\in\X}\eta_t\langle \*U_iG_i(\*x^{(t)},\xi_t),\*u\rangle+\sum_{i=1}^b\Breg_{\phi_i}(\*u_i||\*x_i^{(t)})\right]\\
	&=\argmin_{\*u\in\X}\frac{\eta_t}b\langle G(\*x,i,\xi_t),\*u\rangle+\Breg_\phi(\*u||\*x^{(t)})
	\end{align*}
	Thus Algorithm~\ref{app:alg:general} is equivalent to \citet[Algorithm~1]{zhang:18} with stochastic oracle $G(\*x,i,\xi)$, step-size sequence $\{\eta_t/b\}_{t=1}^T$, and no regularizer.
	Note that $\phi$ is 1-strongly-convex w.r.t. $\|\cdot\|$ and $f$ is $\gamma$-RWC w.r.t. $\phi$, so in light of Claims~\ref{app:clm:norm} and~\ref{app:clm:conjugate} our setup satisfies Assumption~3.1 of \citet{zhang:18}.
	The result then follows from Theorem~3.1 of the same.
\end{proof}

\begin{Cor}
	Under Setting~\ref{app:set:general} let $\*{\hat x}$ be the output of Algorithm~\ref{app:alg:general} with constant step-size $\eta_t=\sqrt{\frac{2b(f^{(1)}-f^*)}{\gamma L^2T}}~\forall~t\in[T]$, where $f^{(1)}=f(\*x^{(1)})$.
	Then we have
	$$
	\E\Delta_\frac1{2\gamma}(\*{\hat x})
	\le2L\sqrt{\frac{2b\gamma(f^{(1)}-f^*)}T}
	$$
	where the expectation is w.r.t. $\xi_t$ and the randomness of the algorithm.
	Equivalently, we can reach a point $\*{\hat x}$ satisfying $\E\Delta_\frac1{2\gamma}(\*{\hat x})\le\varepsilon$ in $\frac{8\gamma bL^2(f^{(1)}-f^*)}{\varepsilon^2}$ stochastic oracle calls.
\end{Cor}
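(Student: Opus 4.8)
The plan is to obtain the corollary as a direct specialization of Theorem~\ref{app:thm:general} at the single choice $\hat\gamma=2\gamma$. Since Theorem~\ref{app:thm:general} is valid for every $\hat\gamma>\gamma$ and Setting~\ref{app:set:general} sets $\gamma=\max_{i}\gamma_i$, the value $\hat\gamma=2\gamma>\gamma$ is admissible. This choice immediately does two things: it turns the left-hand side into the desired stationarity measure $\E\Delta_{\frac1{2\gamma}}(\*{\hat x})$, and it collapses the leading factor $\frac{\hat\gamma b}{\hat\gamma-\gamma}$ to exactly $2b$. The coefficient $\frac{\hat\gamma L^2}{2b}$ in the numerator likewise becomes $\frac{\gamma L^2}{b}$.

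Next I would control the data-dependent numerator $\min_{\*u\in\X}f(\*u)+\hat\gamma\Breg_\phi(\*u\|\*x^{(1)})-f^*$, which is the only genuinely analytic step. Evaluating the minimand at the feasible point $\*u=\*x^{(1)}$ gives $f(\*x^{(1)})+\hat\gamma\Breg_\phi(\*x^{(1)}\|\*x^{(1)})=f^{(1)}$, because the Bregman divergence of a point from itself vanishes (Definition~\ref{app:def:bregman}); hence the whole term is at most $f^{(1)}-f^*$. Nonnegativity of $\Breg_\phi$ also shows it is at least $0$, but only the upper bound is needed, and this is exactly what lets the final bound be phrased in terms of the initial suboptimality $f^{(1)}-f^*$ rather than an unspecified proximal value. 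With the constant step-size $\eta_t\equiv\eta$, the two sums in Theorem~\ref{app:thm:general} collapse to $\sum_{t=1}^T\eta_t=T\eta$ and $\sum_{t=1}^T\eta_t^2=T\eta^2$. Substituting all of this yields a bound of the shape $2b\big(\tfrac{f^{(1)}-f^*}{T\eta}+\tfrac{\gamma L^2}{b}\eta\big)$, i.e. a term decreasing in $\eta$ coming from the initial suboptimality plus a term increasing in $\eta$ coming from the gradient-variance level $L^2$. Plugging in the prescribed $\eta=\sqrt{\tfrac{2b(f^{(1)}-f^*)}{\gamma L^2T}}$, which is chosen to trade off these two contributions, and simplifying the radicals produces the claimed $\mathcal O\big(L\sqrt{b\gamma(f^{(1)}-f^*)/T}\big)$ rate.

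Finally, the ``equivalently'' clause is a one-line inversion: setting the right-hand side equal to $\varepsilon$ and solving for $T$ gives $T=\mathcal O\big(\gamma bL^2(f^{(1)}-f^*)/\varepsilon^2\big)$, and since each iteration of Algorithm~\ref{app:alg:general} issues exactly one stochastic-oracle call $G_i(\*x^{(t)},\xi_t)$, the iteration count equals the oracle-call count. I expect no conceptual obstacle anywhere here, as all of the real work is absorbed into Theorem~\ref{app:thm:general}; the only point requiring care is the algebraic bookkeeping of constants when simplifying the two balanced square-root terms under the specific step-size, which is where any discrepancy in the stated numerical constant would surface.
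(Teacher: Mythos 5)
Your route is exactly the intended one: the paper states this corollary without proof, as an immediate specialization of Theorem~\ref{app:thm:general} with $\hat\gamma=2\gamma$, the numerator bounded by evaluating the minimand at $\*u=\*x^{(1)}$, and the constant step-size substituted in. Every step you describe is valid. However, the one step you explicitly defer---``the algebraic bookkeeping of constants''---is precisely where the stated bound and the derivation part ways, and completing it would have surfaced the issue. Writing $F=f^{(1)}-f^*$, the specialization gives
\begin{equation*}
\E\Delta_{\frac1{2\gamma}}(\*{\hat x})\;\le\;2b\cdot\frac{F+\frac{\gamma L^2}{b}T\eta^2}{T\eta}\;=\;\frac{2bF}{T\eta}+2\gamma L^2\eta .
\end{equation*}
With the prescribed $\eta=\sqrt{\tfrac{2bF}{\gamma L^2T}}$ one has $\tfrac{2bF}{T\eta}=\gamma L^2\eta$, so the right-hand side equals $3\gamma L^2\eta=3L\sqrt{\tfrac{2b\gamma F}{T}}$, not $2L\sqrt{\tfrac{2b\gamma F}{T}}$. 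Indeed no constant step-size can do better here: minimizing $\tfrac{2bF}{T\eta}+2\gamma L^2\eta$ over $\eta$ gives $4L\sqrt{b\gamma F/T}=2\sqrt2\,L\sqrt{2b\gamma F/T}$, which still exceeds the claimed constant. So the corollary (and correspondingly the constants $16$ and $\sqrt{4F/\cdots}$ in Theorem~\ref{thm:general} of the main text, which is this corollary with $b=2$, $L^2=G_{\*w}^2+G_\theta^2$, and $f^*\ge0$) is off by a constant factor; the $\mathcal O(L\sqrt{b\gamma F/T})$ rate and the $\mathcal O(\gamma bL^2F/\varepsilon^2)$ oracle complexity are unaffected. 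Your final ``equivalently'' inversion and the identification of iterations with oracle calls are both fine.
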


\newpage
\subsection{A Single-Level Analysis of ENAS and DARTS}\label{ssec:analysis}

In this section we apply our analysis to understanding two existing NAS algorithms, ENAS \citep{pham:18} and DARTS \citep{liu:19}.
For simplicity, we assume objectives induced by architectures in the relaxed search space are $\gamma$-smooth, which excludes components such as ReLU.
However, such cases can be smoothed via Gaussian convolution, i.e. adding noise to every gradient;
thus given the noisiness of SGD training we believe the following analysis is still informative~\citep{kleinberg:18}.

ENAS continuously relaxes $\A$ via a neural controller that samples architectures $a\in\A$, so $\Theta=\R^{\mathcal O(h^2)}$, where $h$ is the number of hidden units.
The controller is trained with Monte Carlo gradients.
On the other hand, first-order DARTS uses a mixture relaxation similar to the one in Section~\ref{ssec:nas} but using a softmax instead of constraining parameters to the simplex.
Thus $\Theta=\R^{|E|\times|O|}$ for $E$ the set of learnable edges and $O$ the set of possible operations.
If we assume that both algorithms use SGD for the architecture parameters then to compare them we are interested in their respective values of $G_\theta$, which we will refer to as $G_\textrm{ENAS}$ and $G_\textrm{DARTS}$.
Before proceeding, we note again that our theory holds only for the single-level objective and when using SGD as the architecture optimizer, whereas both algorithms specify the bilevel objective and Adam \citep{kingma:15}, respectively.

At a very high level, the Monte Carlo gradients used by ENAS are known to be high-variance, so $G_\textrm{ENAS}$ may be much larger than $G_\textrm{DARTS}$, yielding faster convergence for DARTS, which is reflected in practice \citep{liu:19}.
We can also do a simple low-level analysis under the assumption that all architecture gradients are bounded entry-wise, i.e. in $\ell_\infty$-norm, by some constant;
then since the squared $\ell_2$-norm is bounded by the product of the dimension and the squared $\ell_\infty$-norm we have $G_\textrm{ENAS}^2=\mathcal O(h^2)$ while $G_\textrm{DARTS}^2=\mathcal O(|E||O|)$.
Since ENAS uses a hidden state size of $h=100$ and the DARTS search space has $|E|=14$ edges and $|O|=7$ operations, this also points to DARTS needing fewer iterations to converge.

\newpage

\section{Experimental Details}\label{app:empnas}
We provide additional detail on the experimental setup and hyperparameter settings used for each benchmark studied in Section~\ref{sec:empirical}.  We also provide a more detailed discussion of how XNAS differs from GAEA, along with empirical results for XNAS on the NAS-Bench-201 benchmark.

\subsection{DARTS Search Space}

We consider the same search space as DARTS \citep{liu:19}, which has become one of the standard search spaces for CNN cell search \citep{xie:19,nayman:19,chen:19b,noy:19,liang:19}.  Following the evaluation procedure used in \citet{liu:19} and \citet{xu:20} , our evaluation of GAEA PC-DARTS consists of three stages:
\begin{itemize}[leftmargin=*,itemsep=0mm] 
    \item \textbf{Stage 1:} In the search phase, we run GAEA PC-DARTS with 5 random seeds to reduce variance from different initialization of the shared-weights network.\footnote{Note \citep{liu:19} trains the weight-sharing network with 4 random seeds.  However, since PC-DARTS is significantly faster than DARTS, the cost of an additional seed is negligible.}
    \item \textbf{Stage 2:} We evaluate the best architecture identified by each search run by training from scratch.
    \item \textbf{Stage 3:} We perform a more thorough evaluation of the best architecture from stage 2 by training with ten different random seed initializations.
 \end{itemize}

For completeness, we describe the convolutional neural network search space considered.  
A cell consists of 2 input nodes and 4 intermediate nodes for a total of 6 nodes.  The nodes are ordered and subsequent nodes can receive the output of prior nodes as input so for a given node $k$, there are $k-1$ possible input edges to node $k$.  Therefore, there are a total of $2+3+4+5=14$ edges in the weight-sharing network.  

An architecture is defined by selecting 2 input edges per intermediate node and also selecting a single operation per edge from the following 8 operations:
(1) $3\times 3$ separable convolution,
(2) $5\times 5$ separable convolution,
(3) $3\times 3$ dilated convolution,
(4) $5\times 5$ dilated convolution,
(5) max pooling,
(6) average pooling,
(7) identity
(8) zero.
We use the same search space to design a ``normal'' cell and a ``reduction'' cell; the normal cells have stride 1 operations that do not change the dimension of the input, while the reduction cells have stride 2 operations that half the length and width dimensions of the input.  In the experiments, for both cell types, , after which the output of all intermediate nodes are concatenated to form the output of the cell.  

\subsubsection{Stage 1: Architecture Search}
For stage 1, as is done by DARTS and PC-DARTS, we use GAEA PC-DARTS to update architecture parameters for a smaller shared-weights network in the search phase with 8 layers and 16 initial channels.  
All hyperparameters for training the weight-sharing network are the same as that used by PC-DARTS: 
\begin{verbatim}
train:
    scheduler: cosine
    lr_anneal_cycles: 1
    smooth_cross_entropy: false
    batch_size: 256 
    learning_rate: 0.1
    learning_rate_min: 0.0
    momentum: 0.9
    weight_decay: 0.0003
    init_channels: 16
    layers: 8
    autoaugment: false
    cutout: false
    auxiliary: false 
    drop_path_prob: 0
    grad_clip: 5
\end{verbatim}
For GAEA PC-DARTS, we initialize the architecture parameters with equal weight across all options (equal weight across all operations per edge and equal weight across all input edges per node).  Then, we train the shared-weights network for 10 epochs without performing any architecture updates to warmup the weights.  Then, we use a learning rate of 0.1 for the exponentiated gradient update for GAEA PC-DARTS.

\subsubsection{Stage 2 and 3: Architecture Evaluation}
For stages 2 and 3, we train each architecture for 600 epochs with the same hyperparameter settings as PC-DARTS:
\begin{verbatim}
train:
    scheduler: cosine
    lr_anneal_cycles: 1
    smooth_cross_entropy: false
    batch_size: 128
    learning_rate: 0.025
    learning_rate_min: 0.0
    momentum: 0.9
    weight_decay: 0.0003
    init_channels: 36
    layers: 20
    autoaugment: false
    cutout: true
    cutout_length: 16
    auxiliary: true
    auxiliary_weight: 0.4
    drop_path_prob: 0.3
    grad_clip: 5
\end{verbatim}

\subsubsection{Broad Reproducibility}
Our `broad reproducibility' results in Table~\ref{tab:additional_seeds} show the final stage 3 evaluation performance of GAEA PC-DARTS for 2 additional sets of random seeds from stage 1 search.  The performance of GAEA PC-DARTS for one set is similar to that reported in Table~\ref{tab:cnn_sota}, while the other is on par with the performance reported for PC-DARTS in \citet{xu:20}.  We do observe non-negligible variance in the performance of the architecture found by different random seed initializations of the shared-weights network, necessitating running multiple searches before selecting an architecture. We also found that it was possible to identify and eliminate poor performing architectures in just 20 epochs of training during stage 2 intermediate evaluation, thereby reducing the total training cost by over $75\%$ (we only trained 3 out of 10 architectures for the entire 600 epochs). 

\begin{table}[h]
    \centering
    \begin{tabular}{c|c|c}
    \multicolumn{3}{c}{Stage 3 Evaluation}\\
    \hline
    Set 1 (Reported) 	& Set 2				& Set 3 \\
    \hline	
    $2.50\pm 0.07$ 		& $2.50 \pm 0.09$	& $2.60 \pm 0.09$
    \end{tabular}
    \caption{GAEA PC-DARTS Stage 3 Evaluation for 3 sets of random seeds. }
    \label{tab:additional_seeds}
\end{table}

\begin{table}[!t]
	\centering
	\begin{threeparttable}
		\scriptsize
		\begin{tabular}{lcccccc}
			\hline
			& \multicolumn{2}{c}{\textbf{Test Error}}& \textbf{Params}  & \textbf{Search Cost} & \textbf{Comparable} & \textbf{Search} \\  
			\textbf{Architecture} & ~Best~ & Average & (M) & (GPU Days) & \textbf{Search Space} & \textbf{Method} \\
			\hline
			Shake-Shake \citep{devries:17} & N/A & $2.56$ & 26.2 & - & - & manual \\
			PyramidNet \citep{yamada:18} & 2.31 & N/A & 26 &- &- & manual \\
			\hline
			NASNet-A$^*$ \citep{zoph:18} & N/A & 2.65 & 3.3 &2000 & N & RL \\
			AmoebaNet-B$^*$ \citep{real:19} & N/A & $2.55 \pm 0.05$ & 2.8 &3150 & N & evolution \\
			\hline
			ProxylessNAS$^\ddagger$ \citep{cai:19} & 2.08 & N/A & 5.7 & 4 & N & gradient \\
			ENAS \citep{pham:18} & 2.89 & N/A & 4.6 & 0.5 & Y & RL \\
			Random search WS$^\dagger$  \citep{li:19a} & 2.71 & $2.85\pm 0.08 $& 3.8 & 0.7 &Y & random \\
			ASNG-NAS \citep{akimoto:19} & N/A & $2.83\pm 0.14$ & 3.9 & 0.1 & Y & gradient \\
			SNAS \citep{xie:19}  & N/A & $2.85 \pm 0.02$ & 2.8 & 1.5& Y & gradient \\
			DARTS (1st-order)$^\dagger$  \citep{liu:19}   &  N/A & $3.00 \pm 0.14$ & 3.3 & 0.4 & Y & gradient \\
			DARTS (2nd-order)$^\dagger$\citep{liu:19} &  N/A & $2.76 \pm 0.09$ & 3.3 & 1 & Y & gradient \\
			
			PDARTS$^\#$ \citep{chen:19b} & 2.50 & N/A & 3.4 & 0.3 & Y & gradient \\				
			PC-DARTS$^\dagger$ \citep{xu:20} & N/A & $2.57\pm 0.07$ & 3.6 & 0.1 & Y & gradient \\

			\textbf{GAEA} PC-DARTS$^\dagger$ (Ours) & 2.39 & $2.50\pm 0.06$ & 3.7 & 0.1 & Y & gradient \\
			\hline
		\end{tabular}
		\begin{tablenotes}\setlength\itemsep{-1mm}
			\item[$^*$] We show results for networks with a comparable number of parameters.
			\item[$^\dagger$] For fair comparison to other work, we show the search cost for training the shared-weights network with a single initialization.
			\item[$^\ddagger$] Search space and backbone architecture (PyramidNet) differ from the DARTS setting.
			\item[$^\#$] PDARTS results not reported for multiple seeds.  Additionally, PDARTS uses deeper weight-sharing networks during search, on which PC-DARTS has also been shown to improve performance \citep{xu:20}, so we GAEA PC-DARTS to further improve if modified similarly.\vspace{-2mm}
		\end{tablenotes}
		\caption{\label{tab:cifar10}\textbf{DARTS (CIFAR-10):} Comparison with manually designed networks and those found by SOTA NAS methods, mainly on the DARTS search space~\citep{liu:19}. 
			Results grouped by the type of search method: manually designed, full-evaluation NAS, and weight-sharing NAS. 
			All test errors are for models trained with auxiliary towers and cutout (parameter counts exclude auxiliary weights). Test errors we report are averaged over 10 seeds. ``-" indicates that the field does not apply while ``N/A" indicates unknown. Note that search cost is hardware-dependent; our results used Tesla V100 GPUs. \vspace{20mm}
		}
	\end{threeparttable}
\end{table}

\begin{table}[!t]
	\centering
	\begin{threeparttable}
		\scriptsize
		\begin{tabular}{lcccccc}
			\hline
			& & \multicolumn{2}{c}{\textbf{Test Error}}& \textbf{Params}  & \textbf{Search Cost} & \textbf{Search} \\  
			\textbf{Architecture} & \textbf{Source} & ~top-1~ & ~top-5~ & (M) & (GPU Days) & \textbf{Method} \\
			\hline
			MobileNet & \citep{howard:17} & 29.4 & 10.5 & 4.2 & - & manual \\
			ShuffleNet V2 2x & \citep{ma:18} & 25.1 & N/A & $\sim 5$ & - & manual \\
			\hline
			NASNet-A$^*$ & \citep{zoph:18} & 26.0 & 8.4 & 5.3 & 1800 & RL \\
			AmoebaNet-C$^*$ & \citep{real:19} & 24.3 & 7.6 & 6.4 &3150 & evolution \\
			\hline
			DARTS$^\dagger$   & \citep{liu:19}   &  26.7 & 8.7 & 4.7 & 4.0 & gradient \\
			SNAS & \citep{xie:19}  & 27.3 & 9.2 & 4.3 & 1.5 & gradient \\
			ProxylessNAS$^\ddagger$ & \citep{cai:19} & 24.9 & 7.5 & 7.1 & 8.3 & gradient \\
			PDARTS$^\#$ & \citep{chen:19b} & 24.4 & 7.4 & 4.9 & 0.3 & gradient \\				
			PC-DARTS (CIFAR-10)$^\dagger$ & \citep{xu:20} & 25.1 & 7.8 & 5.3 & 0.1 & gradient \\
			PC-DARTS (ImageNet)$^\dagger$ & \citep{xu:20} & 24.2 & 7.3  & 5.3 & 3.8 & gradient \\
			
			\textbf{GAEA} PC-DARTS (CIFAR-10)$^\dagger$ & Ours & 24.3 & 7.3 & 5.3 & 0.1 & gradient \\
			\textbf{GAEA} PC-DARTS (ImageNet)$^\dagger$ & Ours & 24.0 & 7.3 & 5.6 & 3.8 & gradient \\
			\hline
		\end{tabular}
		\begin{tablenotes}
			\item[$^*,^\dagger,^\ddagger,^\#$] See notes in Table~\ref{tab:cifar10}.\vspace{-2mm}
		\end{tablenotes}
		\caption{\label{tab:imagenet} \textbf{DARTS (ImageNet):} Comparison with manually designed networks and those found by SOTA NAS methods, mainly on the DARTS search space~\citep{liu:19}. 
			Results are grouped by the type of search method: manually designed, full-evaluation NAS, and weight-sharing NAS. 
			All test errors are for models trained with auxiliary towers and cutout but no other modifications, e.g. label smoothing~\citep{muller:19}, AutoAugment~\citep{cubuk:19}, Swish~\citep{ramachandran:17}, squeeze and excite modules~\citep{hu:18}, etc. ``-" indicates that the field does not apply while ``N/A" indicates unknown. Note that search cost is hardware-dependent; our results used Tesla V100 GPUs.\vspace{20mm}
		}
	\end{threeparttable}
\end{table}

We depict the top architectures found by GAEA PC-DARTS for CIFAR-10 and ImageNet in Figure~\ref{fig:gaea_pcdarts_archs} and detailed results in Tables~\ref{tab:cifar10} and~\ref{tab:imagenet}.
\begin{figure}[!t]
\centering

\begin{subfigure}{0.99\textwidth}
\centering
\includegraphics[width=\textwidth,trim=20 0 20 0,page=1]{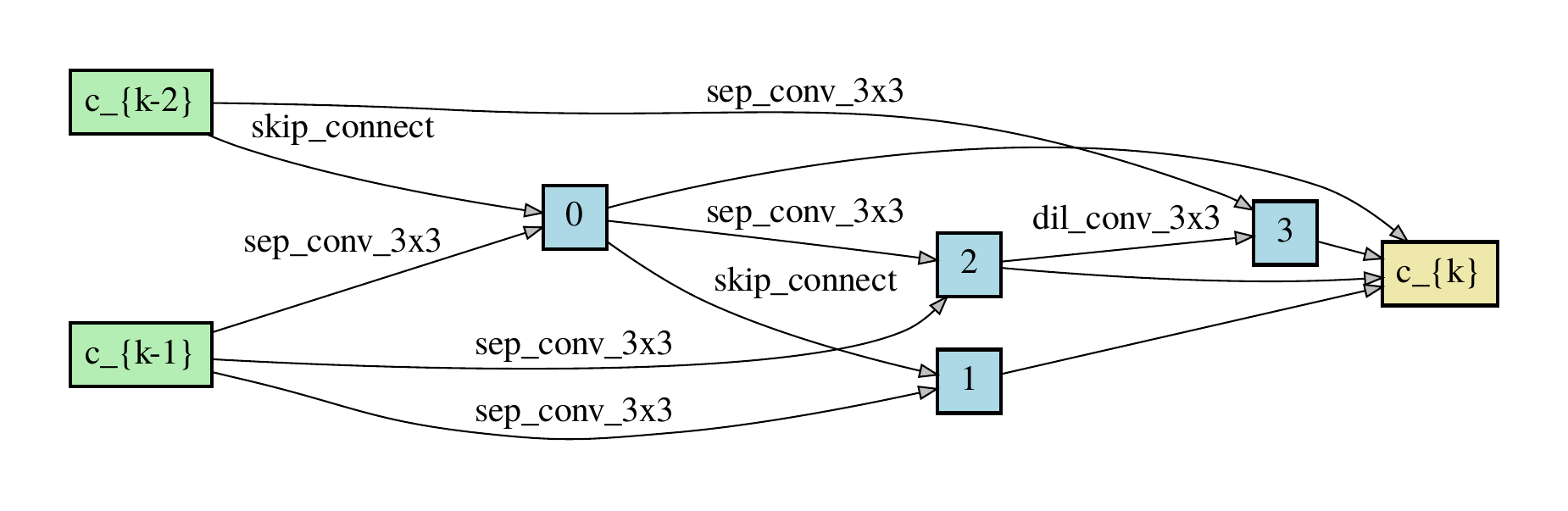}
\caption{CIFAR-10: Normal Cell}
\end{subfigure}

\begin{subfigure}{0.99\textwidth}
\centering
\includegraphics[width=\textwidth,trim=20 0 20 0,page=1]{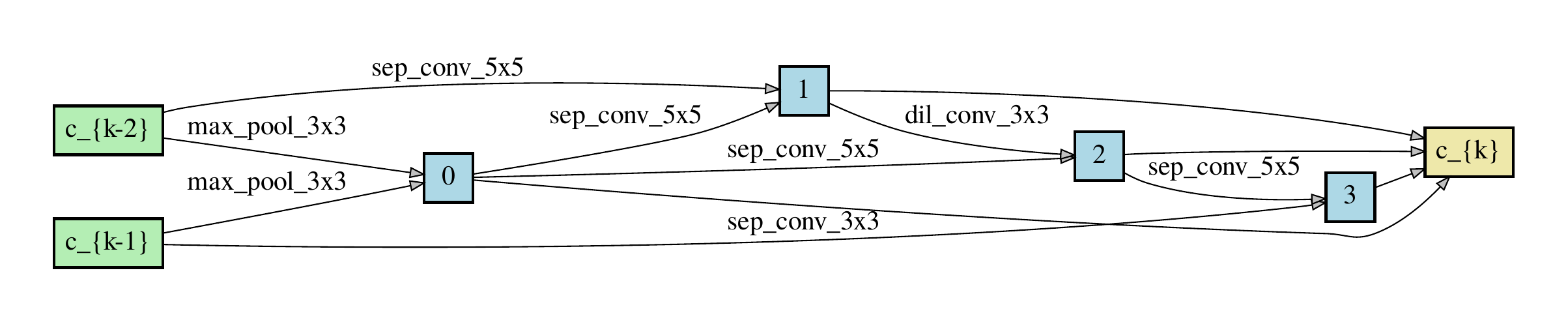}
\caption{CIFAR-10: Reduction Cell}
\end{subfigure}

\begin{subfigure}{0.99\textwidth}
\centering
\includegraphics[width=\textwidth,trim=20 0 20 0,page=1]{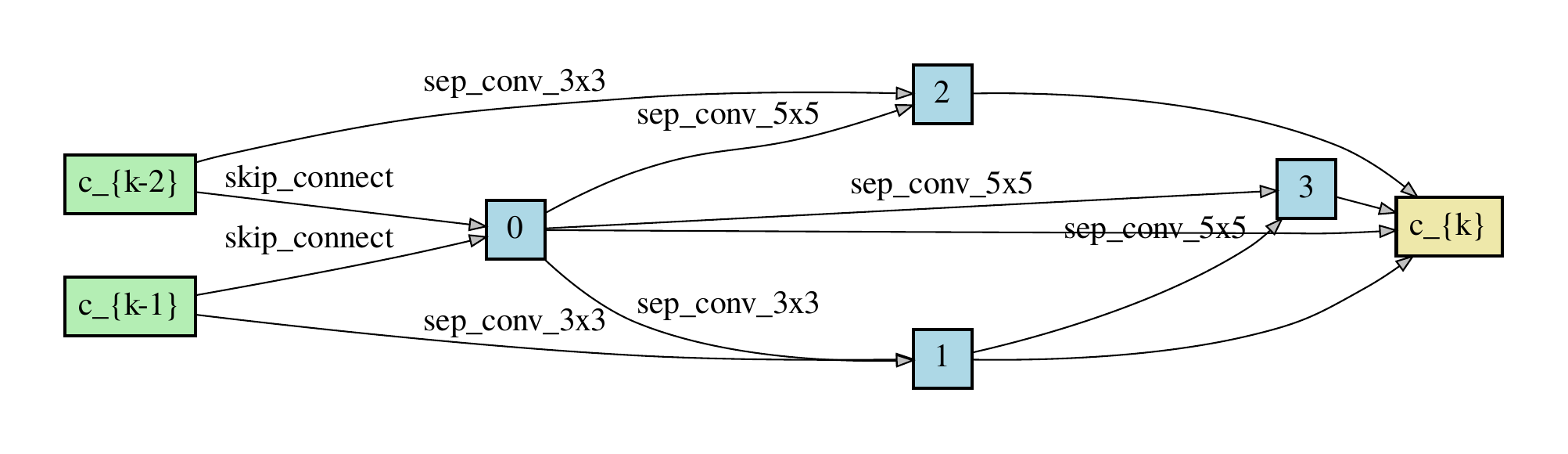}
\caption{ImageNet: Normal Cell}
\end{subfigure}

\begin{subfigure}{0.99\textwidth}
\centering
\includegraphics[width=\textwidth,trim=20 0 20 0,page=1]{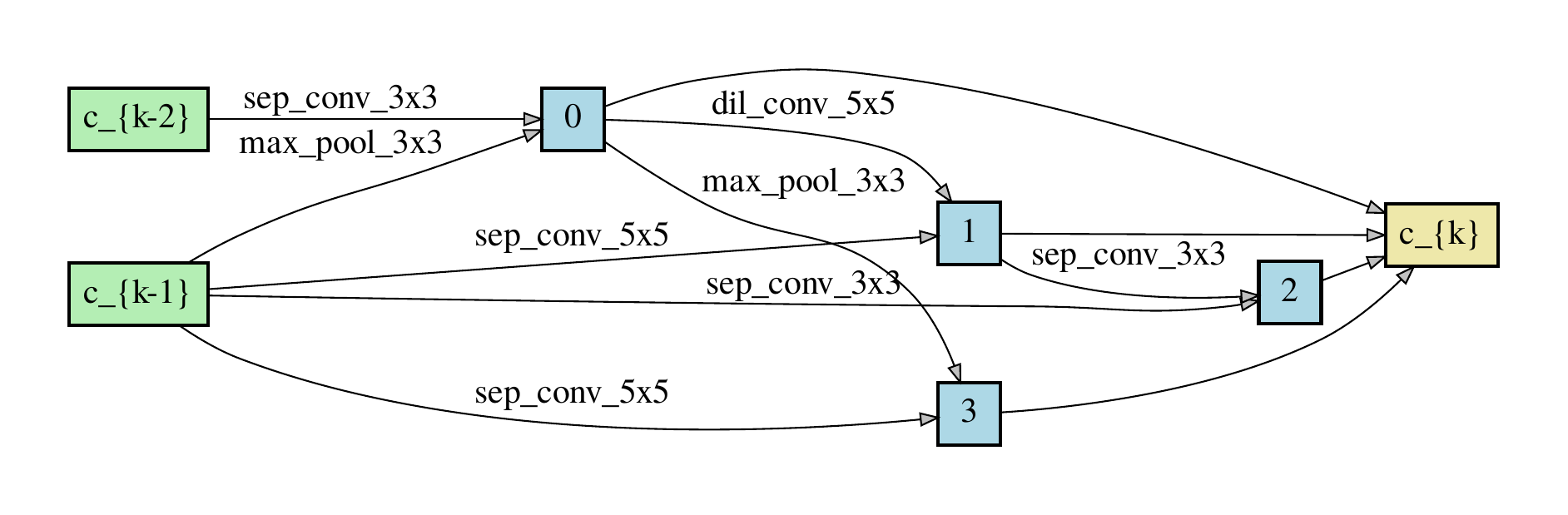}
\caption{ImageNet: Reduction Cell}
\end{subfigure}

\caption{The best normal and reduction cells found by GAEA PC-DARTS on CIFAR-10 (top) and ImageNet (bottom).\vspace{20mm}}\label{fig:gaea_pcdarts_archs}
\end{figure}

\newpage
\subsection{NAS-Bench-1Shot1}

\begin{figure}[!t]
	\centering
	\begin{subfigure}{0.32\textwidth}
		\centering
		\includegraphics[width=0.9\textwidth,trim=40 0 40 0,page=1]{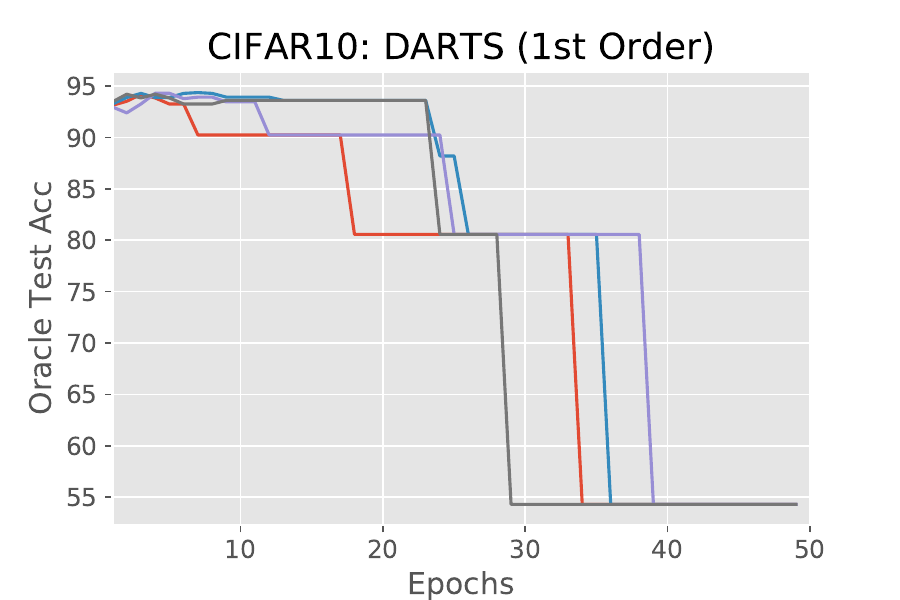}
	\end{subfigure}
	\begin{subfigure}{0.32\textwidth}
		\centering
		\includegraphics[width=0.9\textwidth,trim=40 0 40 0,page=2]{plots/darts_nasbench201.pdf}
	\end{subfigure}
	\begin{subfigure}{0.32\textwidth}
		\centering
		\includegraphics[width=0.9\textwidth,trim=40 0 40 0,page=3]{plots/darts_nasbench201.pdf}
	\end{subfigure}
	\centering
	\begin{subfigure}{0.32\textwidth}
		\centering
		\includegraphics[width=0.9\textwidth,trim=55 140 55 140,page=1]{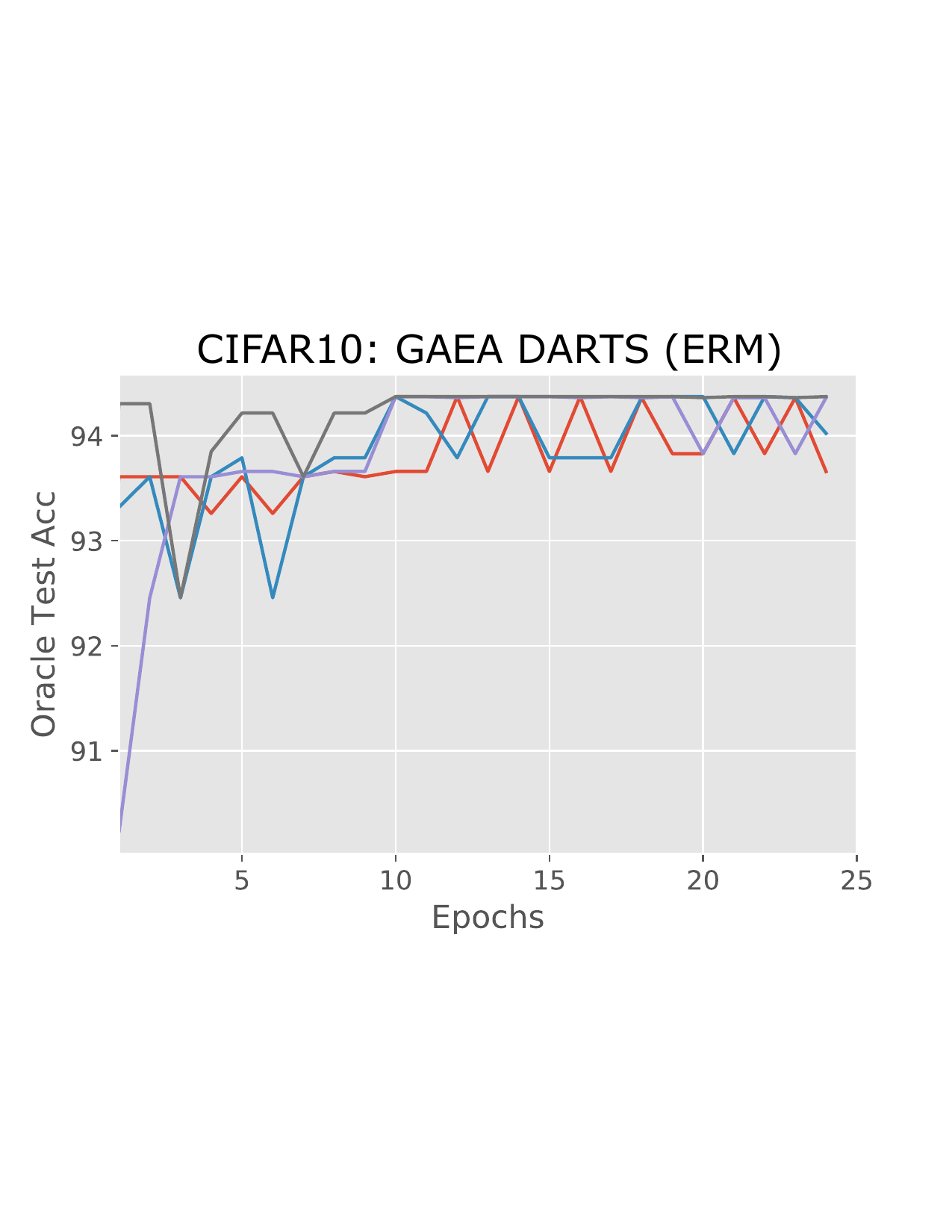}
	\end{subfigure}
	\begin{subfigure}{0.32\textwidth}
		\centering
		\includegraphics[width=0.9\textwidth,trim=55 180 55 180,page=2]{plots/gaea1_nasbench201.pdf}
	\end{subfigure}
	\begin{subfigure}{0.32\textwidth}
		\centering
		\includegraphics[width=0.9\textwidth,trim=55 180 55 180,page=3]{plots/gaea1_nasbench201.pdf}
	\end{subfigure}
	\centering
	\begin{subfigure}{0.32\textwidth}
		\centering
		\includegraphics[width=0.9\textwidth,trim=55 180 55 180,page=1]{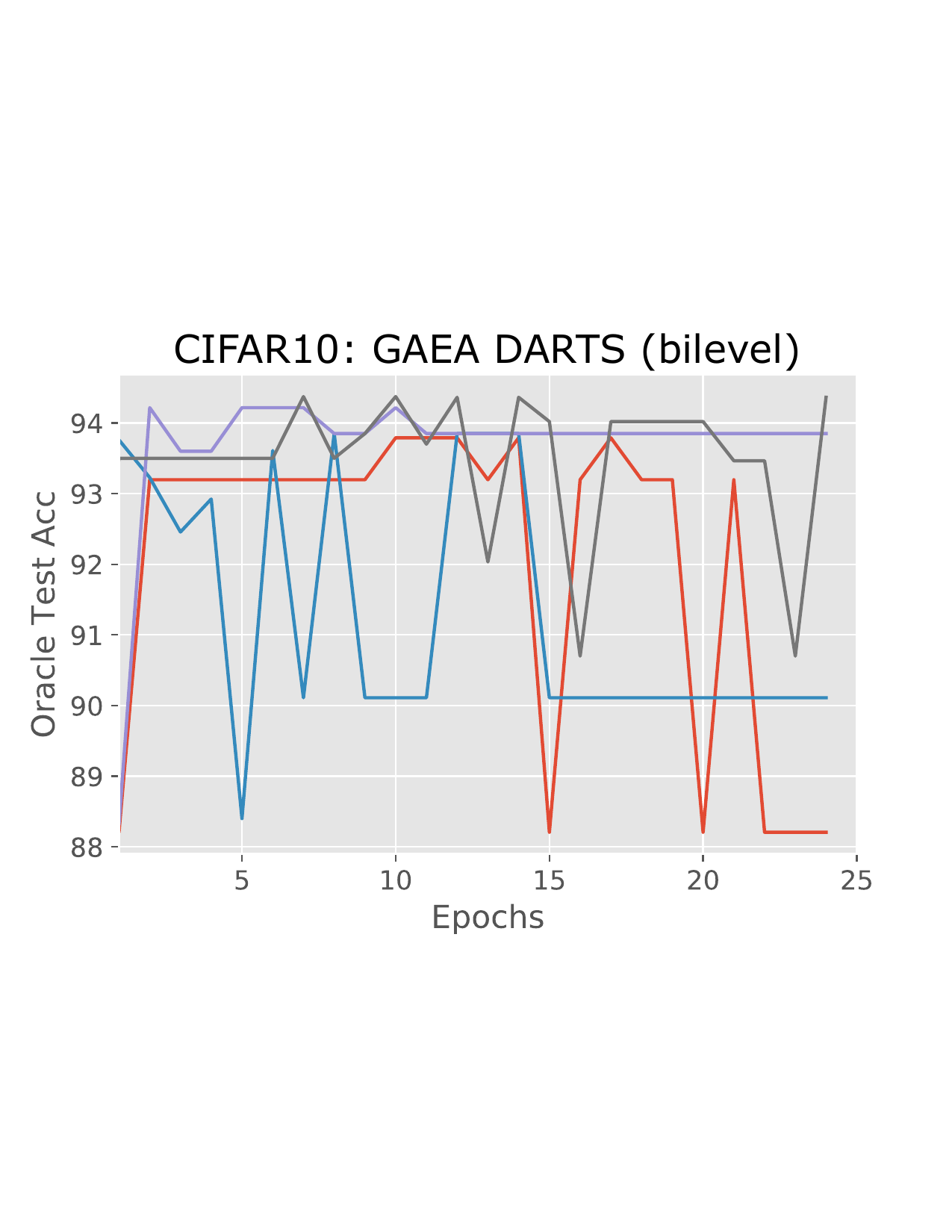}
	\end{subfigure}
	\begin{subfigure}{0.32\textwidth}
		\centering
		\includegraphics[width=0.9\textwidth,trim=55 180 55 180,page=2]{plots/gaea2_nasbench201.pdf}
	\end{subfigure}
	\begin{subfigure}{0.32\textwidth}
		\centering
		\includegraphics[width=0.9\textwidth,trim=55 180 55 180,page=3]{plots/gaea2_nasbench201.pdf}
	\end{subfigure}
	\caption{\textbf{NAS-Bench-201: Learning Curves.} Evolution over search phase epochs of the best architecture according to the NAS method. DARTS (first-order) converges to nearly all skip connections while GAEA is able to suppress overfitting to the mixture relaxation by encouraging sparsity in operation weights.} \label{fig:nasbench201curves}
\end{figure} 

The NAS-Bench-1Shot1 benchmark \citep{zela:20b} contains 3 different search spaces that are subsets of the NAS-Bench-101 search space.  The search spaces differ in the number of nodes and the number of input edges selected per node.  We refer the reader to \citep{zela:20b} for details about each individual search space.  

Of the NAS methods evaluated in \citet{zela:20b}, PC-DARTS had the most robust performance across the three search spaces and converged to the best architecture in search spaces 1 and 3. GDAS, a probabilistic gradient NAS method, achieved the best performance on search space 2.  Hence, we focused on applying a geometry-aware approach to PC-DARTS.
We implemented GAEA PC-DARTS within the repository provided by the authors of \citet{zela:20b} available at \url{https://github.com/automl/nasbench-1shot1}.  We used the same hyperparameter settings for training the weight-sharing network as that used by \citet{zela:20b} for PC-DARTS.  Similar to the previous benchmark, we initialize architecture parameters to allocate equal weight to all options.  For the architecture updates, the only hyperparameter for GAEA PC-DARTS is the learning rate for exponentiated gradient, which we set to 0.1.

As mentioned in Section~\ref{sec:empirical}, the search spaces considered in this benchmark differ in that operations are applied after aggregating all edge inputs to a node instead of per edge input as in the DARTS and NAS-Bench-201 search spaces.  This structure inherently limits the size of the weight-sharing network to scale with the number of nodes instead of the number of edges ($\mathcal O(|\text{nodes}|^2)$), thereby limiting the degree of overparameterization.  Understanding the impact of overparameterization on the performance of weight-sharing NAS methods is a direction for future study.

\subsection{NAS-Bench-201}

The NAS-Bench-201 benchmark \citep{dong:20} evaluates a single search space across 3 datasets: CIFAR-10, CIFAR-100, and a miniature version of ImageNet (ImageNet-16-120).  ImageNet-16-120 is a downsampled version of ImageNet with $16\times 16$ images and 120 classes for a total of 151.7k training images, 3k validation images, and 3k test images.  
The authors of \citet{dong:20} evaluated the architecture search performance of multiple weight-sharing methods and traditional hyperparameter optimization methods on all three datasets.  According to the results from \citet{dong:20}, GDAS outperformed other weight-sharing methods by a large margin.  Hence, we first evaluated the performance of GAEA GDAS on each of the three datasets.  Our implementation of GAEA GDAS uses an architecture learning rate of 0.1, which matches the learning rate used for GAEA approaches in the previous two benchmarks.  Additionally, we run GAEA GDAS for 150 epochs instead of 250 epochs used for GDAS in the original benchmarked results; this is why the search cost is lower for GAEA GDAS.  All other hyperparameter settings are the same.  Our results for GAEA GDAS is comparable to the reported results for GDAS on CIFAR-10 and CIFAR-100 but slightly lower on ImageNet-16-120.  Compared to our reproduced results for GDAS, GAEA GDAS outperforms GDAS on CIFAR-100 and matches it on CIFAR-10 and ImageNet-16-120.  

Next, to see if we can use GAEA to further improve the performance of weight-sharing methods, we evaluated GAEA DARTS (first order) applied to both the single-level (ERM) and bi-level optimization problems.  Again, we used a learning rate of 0.1 and trained GAEA DARTS for 25 epochs on each dataset.  The one additional modification we made was to exclude the \texttt{zero} operation, which limits GAEA DARTS to a subset of the search space.  To isolate the impact of this modification, we also evaluated first-order DARTS with this modification.  Similar to \citep{dong:20}, we observe DARTS with this modification to also converge to architectures with nearly all skip connections, resulting in similar performance as that reported in \citet{dong:20}.
We present the learning curves of the oracle architecture recommended by DARTS and GAEA DARTS (when excluding zero operation) over the training horizon for 4 different runs in Figure~\ref{fig:nasbench201curves}.
For GAEA GDAS and GAEA DARTS, we train the weight-sharing network with the following hyperparameters:
\begin{verbatim}
train:
    scheduler: cosine
    lr_anneal_cycles: 1
    smooth_cross_entropy: false
    batch_size: 64
    learning_rate: 0.025
    learning_rate_min: 0.001
    momentum: 0.9
    weight_decay: 0.0003
    init_channels: 16
    layers: 5
    autoaugment: false
    cutout: false
    auxiliary: false 
    auxiliary_weight: 0.4
    drop_path_prob: 0
    grad_clip: 5
\end{verbatim}

Surprisingly, we observe single-level optimization to yield better performance than solving the bi-level problem with GAEA DARTS on this search space.  In fact, the performance of GAEA DARTS (ERM) not only exceeds that of GDAS, but also outperforms traditional hyperparameter optimization approaches on all three datasets, nearly reaching the optimal accuracy on all three datasets.  In contrast, GAEA DARTS (bi-level) outperforms GDAS on CIFAR-100 and ImageNet-16-120 but underperforms slightly on CIFAR-10.  The single-level results on this benchmark provides concrete support for our convergence analysis, which only applies to the ERM problem.  
As noted in Section~\ref{sec:empirical}, the search space considered in this benchmark differs from the prior two in that there is no subsequent edge pruning.  Additionally, the search space is fairly small with only 3 nodes for which architecture decisions must be made.  The success of GAEA DARTS (ERM) on this benchmark indicate the need for a better understanding of when single-level optimization should be used in favor of the default bi-level optimization problem and how the search space impacts the decision.

\subsection{Comparison to XNAS}\label{ssec:xnas}
As discussed in Section~\ref{sec:intro}, XNAS is similar to GAEA in that it uses an exponentiated gradient update but is motivated from a regret minimization perspective. \citet{nayman:19} provides regret bounds for XNAS relative to the observed sequence of validation losses, however, this is not equivalent to the regret relative to the best architecture in the search space, which would have generated a different sequence of validation losses.  

XNAS also differs in its implementation in two ways: (1) a wipeout routine is used to zero out operations that cannot recover to exceed the current best operation within the remaining number of iterations and (2) architecture gradient clipping is applied per data point before aggregating to form the update.  These differences are motivated from the regret analysis and meaningfully increase the complexity of the algorithm.  Unfortunately, the authors do not provide the code for architecture search in their code release at \url{https://github.com/NivNayman/XNAS}.  
Nonetheless, we implemented XNAS for the NAS-Bench-201 search space to provide a point of comparison to GAEA.  

Our results shown in Figure~\ref{fig:xnasnasbench201} demonstrate that XNAS exhibits much of the same behavior as DARTS in that the operations all converge to skip connections.  We hypothesize that this is due to the gradient clipping, which obscures the signal kept by GAEA in favor of convolutional operations. 
 
\begin{figure}[!t]
\centering
\begin{subfigure}{0.32\textwidth}
\centering
\includegraphics[width=0.9\textwidth,trim=40 0 40 0,page=1]{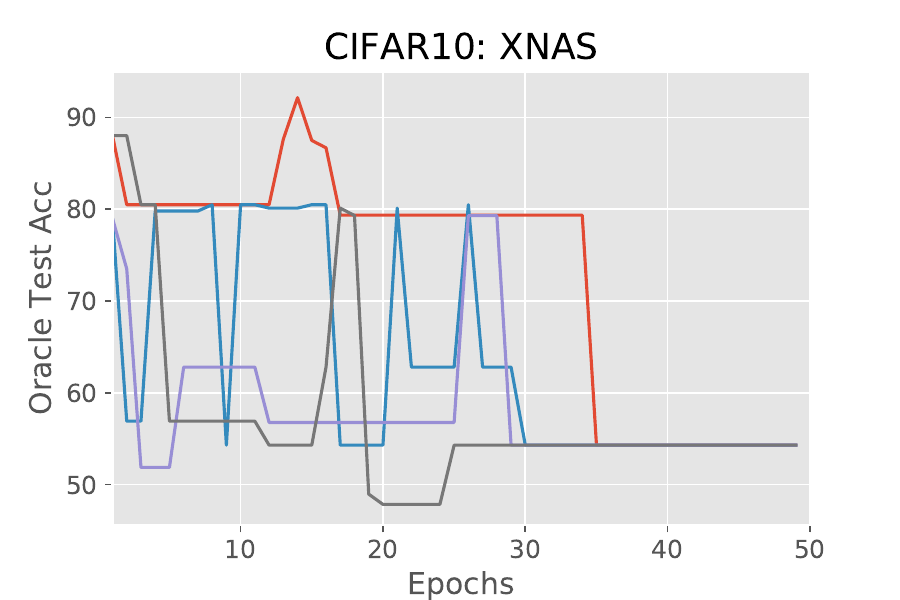}
\end{subfigure}
\begin{subfigure}{0.32\textwidth}
\centering
\includegraphics[width=0.9\textwidth,trim=40 0 40 0,page=2]{plots/xnas_nasbench201.pdf}
\end{subfigure}
\begin{subfigure}{0.32\textwidth}
\centering
\includegraphics[width=0.9\textwidth,trim=40 0 40 0,page=3]{plots/xnas_nasbench201.pdf}
\end{subfigure}

\caption{\textbf{NAS-Bench-201: XNAS Learning Curves.} Evolution over search phase epochs of the best architecture according 4 runs of XNAS.  XNAS exhibits the same behavior as DARTS and converges to nearly all skip connections.} \label{fig:xnasnasbench201}
\end{figure} 

\end{document}